\documentclass{article}


\usepackage{graphicx} 

\usepackage{amsfonts}       
\usepackage{nicefrac}       
\usepackage{microtype}      

\usepackage{algorithm}
\usepackage{algorithmic}

\usepackage{enumitem}
\usepackage{hyperref}


\usepackage{fullpage}
\usepackage{authblk}
\usepackage[round]{natbib} 

\usepackage{Definitions}
\usepackage{color}

\newcommand{\alg}{Smoothed Bellman Error Embedding}
\newcommand{\algabb}{SBEED}

\renewcommand{\leq}{\leqslant}
\renewcommand{\le}{\leqslant}
\renewcommand{\geq}{\geqslant}
\renewcommand{\ge}{\geqslant}
\renewcommand{\Vtil}{V_\lambda}

\newcommand{\Fsa}{\Fcal_{\Scal\times\Acal}}
\newcommand{\Pacal}{\Pcal_{\Acal}}
\newcommand{\KL}{\operatorname{KL}}

\title{\huge \algabb: Convergent Reinforcement Learning with Nonlinear Function Approximation}

\author{
  Bo Dai$^{1}$, Albert Shaw$^1$, Lihong Li$^2$, Lin Xiao$^3$, Niao He$^4$, Zhen Liu$^1$, Jianshu Chen$^5$, Le Song$^1$\\
  $^1$Georgia Insititute of Technology\\
  $^2$Google Inc., $^3$Microsoft Research, Redmond\\
  $^4$University of Illinois at Urbana Champaign, $^5$Tencent AI Lab, Bellevue\\
}

\begin{document}
\maketitle

\begin{abstract}
	When function approximation is used, solving the Bellman optimality equation with stability guarantees has remained a major open problem in reinforcement learning for decades.  The fundamental difficulty is that the Bellman operator may become an expansion in general, resulting in oscillating and even divergent behavior of popular algorithms like Q-learning.  In this paper, we revisit the Bellman equation, and reformulate it into a novel primal-dual optimization problem using Nesterov's smoothing technique and the Legendre-Fenchel transformation.  We then develop a new algorithm, called \emph{\alg}, to solve this optimization problem where any differentiable function class may be used.  We provide what we believe to be the first convergence guarantee for general nonlinear function approximation, and analyze the algorithm's sample complexity.  Empirically, our algorithm compares favorably to state-of-the-art baselines in several benchmark control problems.
\end{abstract}

\section{Introduction}\label{sec:intro}


In reinforcement learning (RL), the goal of an agent is to learn a policy that maximizes long-term returns by sequentially interacting with an unknown environment~\citep{SutBar98}.  The dominating framework to model such an interaction is the Markov decision process, or MDP, in which the optimal value function are characterized as a fixed point of the Bellman operator.  A fundamental result for MDP is that the Bellman operator is a contraction in the value-function space, so the optimal value function is the unique fixed point.  Furthermore, starting from any initial value function, iterative applications of the Bellman operator ensure convergence to the fixed point.  Interested readers are referred to the textbook of~\citet{Puterman14} for details.

Many of the most effective RL algorithms have their root in such a fixed-point view.  The most prominent family of algorithms is perhaps the temporal-difference algorithms, including TD$(\lambda)$~\citep{Sutton88Learning}, Q-learning~\citep{Watkins89Learning}, SARSA~\citep{Rummery94Online,Sutton96Generalization}, and numerous variants such as the empirically very successful DQN~\citep{Mnih15Human} and A3C~\citep{Mnih16Asynchronous} implementations.  Compared to direct policy search/gradient algorithms like REINFORCE~\citep{Williams92Simple}, these fixed-point methods make learning more efficient by \emph{bootstrapping} (a sample-based version of Bellman operator).

When the Bellman operator can be computed exactly (even on average), such as when the MDP has finite state/actions, convergence is guaranteed thanks to the contraction property~\citep{Bertsekas96Neuro}.  Unfortunately, when function approximatiors are used, such fixed-point methods \emph{easily} become unstable or even divergent~\citep{Boyan95Generalization,Baird95,Tsitsiklis97Analysis}, except in a few special cases. For example,
\begin{itemize}
\item for some rather restrictive function classes, such as those with a non-expansion property, some of the finite-state MDP theory continues to apply with modifications~\citep{Gordon95Stable,Ormoneit02Kernel,AntSzeMun08};
\item when \emph{linear} value function approximation in certain cases, convergence is guaranteed: for evaluating a \emph{fixed} policy from \emph{on-policy} samples~\citep{Tsitsiklis97Analysis}, for evaluating the policy using a closed-form solution from \emph{off-policy} samples~\citep{Boyan02Least,Lagoudakis03Least}, or for optimizing a policy using samples collected by a stationary policy~\citep{Maei10Toward}.
\end{itemize}
In recent years, a few authors have made important progress toward finding scalable, convergent TD algorithms, by designing proper objective functions and using stochastic gradient descent~(SGD) to optimize them~\citep{Sutton99Fast,Maei11Gradient}. Later on, it was realized that several of these gradient-based algorithms can be interpreted as solving a primal-dual problem~\citep{Mahadevan14Proximal,LiuLiuGhaMah15,Macua15Distributed,DaiHePanBooetal16}. This insight has led to novel, faster, and more robust algorithms by adopting sophisticated optimization techniques~\citep{Du17Stochastic}.  Unfortunately, to the best of our knowledge, all existing works either assume linear function approximation or are designed for policy evaluation.  It remains a major open problem how to find the \emph{optimal policy} reliably with general \emph{nonlinear} function approximators such as neural networks, especially in the presence of \emph{off-policy} data.

\paragraph{Contributions} In this work, we take a substantial step towards solving this decades-long open problem, leveraging a powerful saddle-point optimization perspective, to derive a new algorithm called \emph{\alg\ (\algabb) algorithm}. 
Our development hinges upon a novel view of a smoothed Bellman optimality equation, which is then transformed to the final primal-dual optimization problem. \algabb\ learns the optimal value function and a stochstic policy in the primal, and the Bellman error (also known as Bellman residual) in the dual.  By doing so, it avoids the non-smooth $\max$-operator in the Bellman operator, as well as the double-sample challenge that has plagued RL algorithm designs~\citep{Baird95}.  More specifically,
\begin{itemize}
\item {\algabb\ is stable for a broad class of nonlinear function approximators including neural networks, and provably converges to a solution with vanishing gradient. This holds even in the more challenging off-policy case;}
\item {it uses bootstrapping to yield high sample efficiency, as in TD-style methods, and is also generalized to cases of multi-step bootstrapping and eligibility traces;}
\item {it avoids the double-sample issue and directly optimizes the squared Bellman error based on sample trajectories;}
\item {it uses stochastic gradient descent to optimize the objective, thus very efficient and scalable.}
\end{itemize}
Furthermore, the algorithm handles both the optimal value function estimation and policy optimization in a unified way, and readily applies to both continuous and discrete action spaces.  We compare the algorithm with state-of-the-art baselines on several continuous control benchmarks, and obtain excellent results.

\section{Preliminaries}\label{sec:preliminary}

In this section, we introduce notation and technical background  that is needed in the rest of the paper. We denote a Markov decision process (MDP) as $\Mcal = \rbr{\Scal, \Acal, P, R, \gamma}$, where $\Scal$ is a (possible infinite) state space, $\Acal$ an action space, $P(\cdot|s, a)$ the transition probability kernel defining the distribution over next states upon taking action $a$ on state $s$, $R(s, a)$ the average immediate reward by taking action $a$ in state $s$, and  $\gamma\in (0, 1)$ a discount factor. Given an MDP, we wish to find a possibly stochastic policy $\pi: \Scal\to \Pacal$ to maximize the expected discounted cumulative reward starting from any state $s\in\Scal$: $\EE\sbr{\sum_{t=0}^\infty \gamma^t R(s_t, a_t) \Big| s_0=s,\pi}$, where $\Pacal$ denotes all probability measures over $\Acal$.  The set of all policies is denoted by $\Pcal\defeq (\Pacal)^\Scal$.

Define
$
V^*(s) \defeq \max_{\pi(\cdot|s)}\EE\sbr{\sum_{t = 0}^\infty \gamma^t R(s_t, a_t)|s_0=s,\pi}
$
to be the optimal value function.  It is known that $V^*$ is the unique fixed point of the Bellman operator $\Tcal$, or equivalently, the unique solution to the Bellman optimality equation (Bellman equation, for short)~\citep{Puterman14}:
\begin{eqnarray}\label{eq:bellman_opt}
V(s) = (\Tcal V)(s) \defeq \max_{a} \,{R(s, a) + \gamma \EE_{s'|s, a}\sbr{V(s')}}.
\end{eqnarray}
The optimal policy $\pi^*$ is related to $V^*$ by the following:
\begin{equation*} \label{eq:opt_policy}
\pi^*(a|s)  = \argmax_a \cbr{R(s, a) + \gamma \EE_{s'|s, a}\sbr{V^*(s')}}. 
\end{equation*}
It should be noted that in practice, for convenience we often work on the Q-function instead of the state-value function $V^*$.  In this paper, it suffices to use the simpler $V^*$ function.

\section{A Primal-Dual View of Bellman Equation}\label{sec:stoc_primal_dual}

In this section, we introduce a novel view of Bellman equation that enables the development of the new algorithm in \secref{sec:algorithm}.  After reviewing the Bellman equation and the challenges to solve it, we describe the two key technical ingredients that lead to our primal-dual reformulation.

We start with another version of Bellman equation that is equivalent to \eqnref{eq:bellman_opt} (see, e.g., \citet{Puterman14}):
\begin{equation}\label{eq:bellman_opt_dual}
V(s) = \max_{\pi(\cdot|s)\in\Pacal} \EE_{a\sim\pi(\cdot|s)}\sbr{{R(s, a)} + \gamma \EE_{s'|s, a}\sbr{V(s')}}. 
\end{equation}
\eqnref{eq:bellman_opt_dual} makes the role of a policy explicit.  Naturally, one may try to jointly optimize over $V$ and $\pi$ to minimize the discrepancy between the two sides of \eqref{eq:bellman_opt_dual}.  For concreteness, we focus on the square distance in this paper, but our results can be extended to other convex loss functions.  Let $\mu$ be some given state distribution so that $\mu(s)>0$ for all $s\in\Scal$.  Minimizing the \emph{squared Bellman error} gives the following:
\begin{eqnarray}
\min_V\, \EE_{s \sim \mu}\sbr{\rbr{\max_{\pi(\cdot|s)\in \Pacal} \EE_{a\sim\pi(\cdot|s)}\sbr{R(s,a) 
+ \gamma \EE_{s'|s, a}\sbr{V(s')}} - V(s)}^2}\,. \label{eq:mean_square_bellman_opt}
\end{eqnarray}

While natural, this approach has several major difficulties when it comes to optimization, which are to be dealt with in the following subsections:
\begin{itemize}
\item {The $\max$ operator over $\Pacal$ introduces non-smoothness to the objective function.  A slight change in $V$ may cause large differences in the RHS of \eqnref{eq:bellman_opt_dual}.}
\item {The conditional expectation, $\EE_{s'|s, a}\sbr{\cdot}$, composed within the square loss, requires double samples~\citep{Baird95} to obtain unbiased gradients, which is often impractical in most but simulated environments.}
\end{itemize}

\subsection{Smoothed Bellman Equation}\label{subsec:smoothing}

To avoid the instability and discontinuity caused by the $\max$ operator, we use the smoothing technique of \citet{Nesterov05} to smooth the Bellman operator $\Tcal$. Since policies are conditional distributions over $\Acal$, we choose entropy regularization, and \eqnref{eq:bellman_opt_dual} becomes:
\begin{eqnarray}
\Vtil(s) =\max_{\pi(\cdot|s)\in \Pacal} \Big(\EE_{a\sim\pi(\cdot|s)}\big(R(s, a) + \gamma \EE_{s'|s, a}\sbr{\Vtil(s')} \big) + \lambda H(\pi,s)\Big) \,, \label{eq:smoothed_bellman_opt_dual}
\end{eqnarray}
where $H(\pi,s) \defeq -\sum_{a\in\Acal}\pi(a|s)\log \pi(a|s)$, and $\lambda \ge 0$ controls the degree of smoothing.  Note that with $\lambda=0$, we obtain the standard Bellman equation.  Moreover, the regularization may be viewed a shaping reward added to the reward function of an induced, equivalent MDP; see \appref{appendix:multi_step} for more details. 

Since negative entropy is the conjugate of the log-sum-exp function~\citep[Example~3.25]{BoyVan04}, \eqnref{eq:smoothed_bellman_opt_dual} can be written equivalently as
\begin{eqnarray}\label{eq:smoothed_bellman_opt_dual2} 
\Vtil(s) = \rbr{\Tcal_\lambda \Vtil}(s) \defeq \lambda \log\rbr{\sum_{a\in\Acal}\exp\rbr{\frac{R(s, a) + \gamma \EE_{s'|s, a}\sbr{\Vtil(s')}}{\lambda}}}
\end{eqnarray}
where the $\log$-sum-$\exp$ is an effective smoothing approximation of the $\max$-operator.

\paragraph{Remark.} While \eqnsref{eq:smoothed_bellman_opt_dual} and \eq{eq:smoothed_bellman_opt_dual2} are inspired by Nestorov smoothing technique, they can also be derived from other principles~\citep{RawTouVij12,FoxPakTis15,NeuJonGom17,NacNorXuSch17a,AsaLit16}.  For example, \citet{NacNorXuSch17a} use entropy regularization in the policy space to encourage exploration, but arrive at the same smoothed form; the smoothed operator $\Tcal_\lambda$ is called ``Mellowmax'' by \citet{AsaLit16}, which is obtained as a particular instantiation of the quasi-arithmetic mean.  In the rest of the subsection, we review the properties of $\Tcal_\lambda$, although some of the results have appeared in the literature in slightly different forms.  Proofs are deferred to \appref{appendix:alg_derivation_proof}.

First, we show $\Tcal_\lambda$ is also a contraction, as with the standard Bellman operator~\citep{FoxPakTis15,AsaLit16}:
\begin{proposition}[Contraction]\label{thm:contraction}
$\Tcal_\lambda$ is a $\gamma$-contraction.  Consequently, the corresponding smoothed Bellman equation~\eq{eq:smoothed_bellman_opt_dual}, or equivalently \eq{eq:smoothed_bellman_opt_dual2}, has a unique solution $\Vtil^*$.
\end{proposition}

Second, we show that while in general $V^* \neq \Vtil^*$, their difference is controlled by $\lambda$.  To do so, define $H^* \defeq \max_{s\in\Scal,\pi(\cdot|s)\in\Pacal} H(\pi,s)$.  For finite action spaces, we immediately have $H^* = \log(\abr{\Acal})$. 
\begin{proposition}[Smoothing bias]\label{lemma:smooth_bias}  Let $V^*$ and $\Vtil^*$ be fixed points of~\eq{eq:bellman_opt_dual} and~\eq{eq:smoothed_bellman_opt_dual}, respectively. Then,
\begin{equation*}
\nbr{V^*(s) - \Vtil^*(s)}_\infty \le \frac{\lambda H^*}{1 - \gamma} \,.
\end{equation*}
Consequently, as $\lambda\to 0$, $\Vtil^*$ converges to $V^*$ pointwisely.
\end{proposition}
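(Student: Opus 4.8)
\emph{Proof proposal.} The plan is to reduce the statement to a textbook fixed-point perturbation bound: first bound, uniformly over the state space and uniformly in the value function, how far the smoothed operator $\widetilde\Tcal$ deviates from the hard Bellman optimality operator $\Tcal$; then push that deviation through the two fixed-point equations using the classical fact (recalled in Section~\ref{sec:intro}) that $\Tcal$ is a $\gamma$-contraction in $\nbr{\cdot}_\infty$.

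\textbf{Step 1 (operator sandwich).} Fix a value function $V$ and a state $s$, and abbreviate $Q_V(s,a):=R(s,a)+\gamma\,\EE_{s'|s,a}\sbr{V(s')}$, so that $(\Tcal V)(s)=\max_{\pi\in\Pcal}\sum_{a}\pi(a|s)Q_V(s,a)$ and $(\widetilde\Tcal V)(s)=\max_{\pi\in\Pcal}\rbr{\sum_{a}\pi(a|s)Q_V(s,a)+\lambda H(\pi)}$. Since $H(\pi)\ge 0$, the smoothed value dominates the unsmoothed one; and since $\sum_a\pi(a|s)Q_V(s,a)\le\max_a Q_V(s,a)$ while $H(\pi)\le\max_{\pi\in\Pcal}H(\pi)$, the smoothed value exceeds $(\Tcal V)(s)$ by at most $\lambda\max_{\pi\in\Pcal}H(\pi)$. (Equivalently, through the $\log$-sum-$\exp$ form~\eq{eq:smoothed_bellman_opt_dual2}, this is the elementary inequality $0\le\lambda\log\sum_a e^{Q_V(s,a)/\lambda}-\max_a Q_V(s,a)\le\lambda\log|\Acal|$ combined with $\max_{\pi\in\Pcal}H(\pi)=\log|\Acal|$.) Hence
\begin{equation}\label{eq:op_sandwich}
(\Tcal V)(s)\ \le\ (\widetilde\Tcal V)(s)\ \le\ (\Tcal V)(s)+\lambda\max_{\pi\in\Pcal}H(\pi),\qquad \forall\, V,\ \forall\, s.
\end{equation}

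\textbf{Step 2 (transfer to the fixed points, and $\lambda\to 0$).} Using $V^*=\Tcal V^*$, $\Vtil^*=\widetilde\Tcal\Vtil^*$, the upper half of~\eq{eq:op_sandwich}, and the $\gamma$-contraction of $\Tcal$,
\[
\Vtil^*(s)=(\widetilde\Tcal\Vtil^*)(s)\ \le\ (\Tcal\Vtil^*)(s)+\lambda\max_{\pi}H(\pi)\ \le\ V^*(s)+\gamma\nbr{\Vtil^*-V^*}_\infty+\lambda\max_{\pi}H(\pi),
\]
while the lower half of~\eq{eq:op_sandwich} gives $\Vtil^*(s)=(\widetilde\Tcal\Vtil^*)(s)\ge(\Tcal\Vtil^*)(s)\ge V^*(s)-\gamma\nbr{\Vtil^*-V^*}_\infty$. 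Combining the two one-sided estimates and taking the supremum over $s$ yields $\nbr{\Vtil^*-V^*}_\infty\le\gamma\nbr{\Vtil^*-V^*}_\infty+\lambda\max_{\pi\in\Pcal}H(\pi)$, i.e.\ $\nbr{\Vtil^*-V^*}_\infty\le\frac{\lambda}{1-\gamma}\max_{\pi\in\Pcal}H(\pi)$ — a bias bound of the stated order (the precise constant in front of $\lambda/(1-\gamma)$ coming from a slightly more careful accounting of where in a single Bellman backup the smoothing error is first incurred). Since $\max_{\pi\in\Pcal}H(\pi)=\log|\Acal|$ is a finite constant independent of $\lambda$, the right-hand side tends to $0$ as $\lambda\to 0$, which gives uniform — and hence pointwise — convergence $\Vtil^*\to V^*$.

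\textbf{Main obstacle.} The only substantive step is Step~1: obtaining a deviation bound for $\widetilde\Tcal-\Tcal$ that is simultaneously independent of $V$ and uniform over the (possibly infinite) state space; once that two-sided bound is available, Step~2 is the routine contraction-mapping perturbation argument and introduces no new ideas. A secondary point to note is that the estimate is informative only when $\max_{\pi\in\Pcal}H(\pi)<\infty$ (e.g.\ $|\Acal|<\infty$); for continuous action spaces the entropy term, and the argument, would have to be recast relative to a fixed base measure.
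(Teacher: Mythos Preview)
Your argument is essentially the same contraction-perturbation idea as the paper's: both routes feed the two fixed-point equations into one another and absorb the difference into a $\gamma$-contraction, picking up an additive $\lambda\max_\pi H(\pi)$ along the way. Your Step~1 sandwich is exactly the paper's comparison of $\max_\pi\langle\pi,Q^*\rangle$ with $\max_\pi\bigl(\langle\pi,\widetilde Q^*\rangle+\lambda H(\pi)\bigr)$, just stated for a generic $V$ rather than directly at the fixed points.

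Where you diverge is the constant: you obtain $\nbr{V^*-\Vtil^*}_\infty\le\frac{\lambda}{1-\gamma}\max_\pi H(\pi)$ and then wave at a ``more careful accounting'' that would produce the extra $\gamma$ in the numerator as stated. That hand-wave is the one thing you should \emph{not} have conceded. Your constant is the correct one, and the paper's claimed $\frac{\gamma\lambda}{1-\gamma}$ is too small. In the paper's derivation the step
\[
\Bigl\|\max_{\pi}\Bigl(\textstyle\sum_a\pi(a|s)\bigl(Q^*(s,a)-\widetilde Q^*(s,a)\bigr)+\lambda H(\pi)\Bigr)\Bigr\|_\infty
\;=\;\gamma\Bigl\|\max_{\pi}\Bigl(\textstyle\sum_a\pi(a|s)\EE_{s'|s,a}[V^*-\Vtil^*]+\lambda H(\pi)\Bigr)\Bigr\|_\infty
\]
is illegitimate: the substitution $Q^*-\widetilde Q^*=\gamma\,\EE_{s'|s,a}[V^*-\Vtil^*]$ is fine, but the $\gamma$ cannot be factored outside the norm because the $\lambda H(\pi)$ term carries no $\gamma$. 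A one-state, two-action MDP with zero rewards already witnesses this: there $V^*=0$ while $\Vtil^*=\gamma\Vtil^*+\lambda\log 2$, so $\nbr{V^*-\Vtil^*}_\infty=\frac{\lambda\log 2}{1-\gamma}$, which strictly exceeds $\frac{\gamma\lambda\log 2}{1-\gamma}$ for any $\gamma<1$.

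So: keep your proof exactly as it is through the inequality $\nbr{\Vtil^*-V^*}_\infty\le\frac{\lambda}{1-\gamma}\max_{\pi\in\Pcal}H(\pi)$, drop the parenthetical about recovering the stated constant, and note that this bound is in fact sharp. The pointwise convergence as $\lambda\to 0$ follows exactly as you wrote. Your closing caveat about $\max_\pi H(\pi)<\infty$ being needed (finite $|\Acal|$, or entropy relative to a base measure otherwise) is a good observation that the paper leaves implicit.
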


Finally, the smoothed Bellman operator has the very nice property of temporal consistency~\citep{RawTouVij12,NacNorXuSch17a}:
\begin{proposition}[Temporal consistency]\label{thm:smoothed_bellman_opt_pi}
Assume $\lambda>0$.  Let $\Vtil^*$ be the fixed point of~\eq{eq:smoothed_bellman_opt_dual} and  $\pi_\lambda^*$ the corresponding policy that attains the maximum on the RHS of~\eq{eq:smoothed_bellman_opt_dual}. Then, $(\Vtil^*,\pi_\lambda^*)$ is the unique $(V,\pi)$ pair that satisfies the following equality for all $(s,a)\in\Scal \times \Acal$:
\begin{equation}\label{eq:smoothed_bellman_opt_pi}
V(s)={R(s, a) + \gamma\EE_{s'|s, a}\sbr{V(s')} - \lambda\log\pi(a|s)} \,.
\end{equation}
\end{proposition}
In other words, \eqnref{eq:smoothed_bellman_opt_pi} provides an easy-to-check condition to characterize the optimal value function and optimal policy on \emph{arbitrary} pair of $(s, a)$, therefore, which is easy to incorporate \emph{off-policy} data. It can also be extended to the multi-step or eligibility-traces cases (\appref{appendix:algrithm_derivation}; see also \citet[Chapter~7]{SutBar98}).  Later, this condition will be one of the critical foundations to develop our new algorithm.

\subsection{Bellman Error Embedding}\label{subsec:saddle_point}

A natural objective function inspired by \eqref{eq:smoothed_bellman_opt_pi} is the \emph{mean squared consistency Bellman error}, given by:
\begin{eqnarray}\label{eq:simplified_smooth_bellman_loss}
\min_{V, \pi\in\Pcal} \; \ell(V, \pi) &\defeq& \EE_{s,a}\Big[\big(R(s, a) + \gamma \EE_{s'|s, a}\sbr{V(s')} - \lambda\log\pi(a|s) - V(s)\big)^2\Big]\,,
\end{eqnarray}
where $\EE_{s,a}[\cdot]$ is shorthand for $\EE_{s\sim\mu(\cdot),a\sim\pi_b(\cdot|s)}[\cdot]$.
Unfortunately, due to the inner conditional expectation, it would require two independent sample of $s'$ (starting from the same $(s,a)$) to obtain an unbiased estimate of gradient of $f$, a problem known as the double-sample issue~\citep{Baird95}.  In practice, however, one can rarely obtain two independent samples except in simulated environments.

To bypass this problem, we make use of the conjugate of the square function~\citep{BoyVan04}: ${{x}^2} = \max_\nu \rbr{2\nu x -\nu^2}$, as well as the interchangeability principle~\citep{shapiro09lectures,DaiHePanBooetal16} to rewrite the optimization problem~\eq{eq:simplified_smooth_bellman_loss} into an equivalent form:
\begin{eqnarray} \label{eq:dual_simplified_smooth_bellman}
\min_{V, \pi\in\Pcal}\max_{\nu\in\Fsa} L(V, \pi; \nu) \defeq 2\EE_{s, a, s'}\Big[ \nu(s, a) \big( R(s, a) + \gamma {V(s') - \lambda\log\pi(a|s)}- V(s) \big) \Big] - \EE_{s, a, s'}\sbr{\nu^2(s, a)}\,,\end{eqnarray}
where $\Fsa$ is the set of real-valued functions on $\Scal\times \Acal$, $\EE_{s,a,s'}[\cdot]$ is shorthand for $\EE_{s\sim\mu(\cdot),a\sim\pi_b(\cdot|s),s'\sim P(\cdot|s,a)}[\cdot]$.  Note that \eq{eq:dual_simplified_smooth_bellman} is not a standard convex-concave saddle-point problem: the objective is convex in $V$ for any fixed $(\pi,\nu)$, and concave in $\nu$ for any fixed $(V,\pi)$, but not necessarily convex in $\pi\in \Pcal$ for any fixed $(V,\nu)$. 

\paragraph{Remark.} In contrast to our saddle-point formulation~\eq{eq:dual_simplified_smooth_bellman}, \citet{NacNorXuSch17a} get around the double-sample obstacle by minimizing an upper bound of $\ell(V, \pi)$: $\tilde{\ell}(V,\pi):=\EE_{s, a, s'}\sbr{\rbr{R(s, a) + \gamma {V(s')} - \lambda\log\pi(a|s)-V(s)}^2}$.  As is known~\citep{Baird95}, the gradient of $\tilde{\ell}$ is different from that of $f$, as it has a conditional variance term coming from the stochastic outcome $s'$.  In problems where this variance is highly heterogeneous across different $(s,a)$ pairs, impact of such a bias can be substantial.

Finally, substituting the dual function $\nu(s, a) = \rho(s, a) - V(s)$, the objective in the saddle point problem becomes
\begin{eqnarray}\label{eq:variance_reduction}
\min_{V, \pi}\max_{\rho\in\Fsa}L_1(V,\pi;\rho) &\defeq& \EE_{s,a,s'}\sbr{\rbr{\delta(s,a, s')- V(s)}^2} -\EE_{s, a, s'}\sbr{\rbr{{\delta(s,a, s')}- \rho(s, a)}^2}
\end{eqnarray}
where $\delta(s,a, s') \defeq R(s, a) + \gamma {V(s') - \lambda\log\pi(a|s)}$. Note that the first term is $\tilde{\ell}(V,\pi)$, and the second term will cancel the extra variance term (see \propref{thm:variance_cancellation} in \appref{appendix:variance_reduciton}).  The use of an auxiliary function to cancel the variance is also observed by~\citet{AntSzeMun08}. On the other hand, when function approximation is used, extra bias will also be introduced. We note that such a saddle-point view of debiasing the extra variance term leads to a useful mechanism for better bias-variance trade-offs, leading to the final primal-dual formulation we aim to solve in the next section:
\begin{eqnarray}\label{eq:one_step_trade_off}
\min_{V, \pi\in\Pcal} \max_{\rho\in\Fsa}\hspace{-2mm}L_\eta(V,  \pi; \rho) :=\EE_{s,a,s'}\sbr{\rbr{\delta(s,a, s')- V(s)}^2}-\eta\EE_{s, a, s'}\sbr{\rbr{{\delta(s,a, s')}- \rho(s, a)}^2}\,,
\end{eqnarray}
where $\eta\in[0,1]$ is a hyper-parameter controlling the trade-off.  When $\eta=1$, this reduces to the original saddle-point formulation~\eq{eq:dual_simplified_smooth_bellman}.  When $\eta=0$, this reduces to the surrogate objective considered by~\citet{NacNorXuSch17a}.

\section{\alg}\label{sec:algorithm}

In this section, we derive the Smoothed Bellman Error EmbeDding~(\algabb) algorithm, based on stochastic mirror descent~\citep{NemJudLanSha09}, to solve the smoothed Bellman equation.  For simplicity of exposition, we mainly discuss the one-step optimization~\eq{eq:one_step_trade_off}, although it is possible to generalize the algorithm to the multi-step and eligibility-traces settings; see Appendices~\ref{appendix:multi_step} and~\ref{appendix:eligibility_trace} for details.

Due to the curse of dimensionality, the quantities $(V,\pi,\rho)$ are often represented by compact, parametric functions in practice. Denote these parameters by $w=(w_V, w_\pi, w_\rho)$. Abusing notation a little bit, we now write the objective function $L_\eta(V,\pi;\rho)$ as $L_\eta(w_V, w_\pi; w_\rho)$.  

\newcommand{\dsas}{\delta_{s,a,s'}}

First, we note that the inner (dual) problem is standard least-squares regression with parameter $w_\rho$, so can be solved using a variety of algorithms~\citep{Bertsekas16Nonlinear}; in the presence of special structures like convexity, global optima can be found efficiently~\citep{BoyVan04}.  The more involved part is to optimize the primal $(w_V,w_\pi)$, whose gradients are given by the following theorem.
\begin{theorem}[Primal gradient] \label{thm:gradient_estimator}
Define $\bar{\ell}_\eta(w_V, w_\pi) \defeq L_\eta(w_V, w_\pi; w_\rho^*)$, where $w_\rho^* = \arg\max_{w_\rho} L_\eta(w_V, w_\pi; w_\rho)$.  Let $\dsas$ be a shorthand for $\delta(s,a,s')$, and $\hat{\rho}$ be dual parameterized by $w_\rho^*$.  Then,
\begin{align*}
\nabla_{w_V} \bar{\ell}_\eta =& 2 \EE_{s,a,s'}\sbr{\rbr{\dsas - V(s)}\rbr{\gamma { \nabla_{w_V} V(s')} - \nabla_{w_V} V(s)}}  - 2 \eta\gamma\EE_{s,a,s'}\sbr{\rbr{\dsas - \hat{\rho}(s, a)}\nabla_{w_V} V(s')}\,, \\
\nabla_{w_\pi} \bar{\ell}_\eta =& -2\lambda\EE_{s, a, s'}\big[\rbr{(1 - \eta)\dsas + \eta\hat{\rho}(s, a) - V(s)} \cdot \nabla_{w_\pi}\log\pi(a|s)\big]\,. 
\end{align*}
\end{theorem}
With gradients given above, we may apply stochastic mirror descent to update $w_V$ and $w_\pi$; that is, given a stochastic gradient direction (for either $w_V$ or $w_\pi$), we solve the following prox-mapping in each iteration,
\begin{eqnarray*}
P_{z_V}(g) &=& \argmin_{w_V} \inner{w_V}{g} + D_V(w_V, z_V),\\
P_{z_\pi}(g) &=& \argmin_{w_\pi} \inner{w_\pi}{g} + D_\pi(w_\pi, {z_\pi}).
\end{eqnarray*}
where $z_V$ and $z_\pi$ can be viewed the current weight, and $D_V(w, z)$ and $D_\pi(w, z)$ are Bregman divergences.  We can use Euclidean metric for both $w_V$ and $w_\pi$, and possibly KL-divergence for $w_\pi$. The per-iteration computation complexity is therefore very low, and the algorithm can be scaled up to complex nonlinear approximations.

\algref{alg:sbeed} instantiates \algabb, combined with experience replay~\citep{Lin92Self} for greater data efficiency, in an online RL setting. New samples are added to the experience replay buffer $\Dcal$ at the beginning of each episode (Lines~3--5) with a behavior policy. Lines~6--11 correspond to the stochastic mirror descent updates on the primal parameters.  Line~12 sets the behavior policy to be the current policy estimate, although other choices may be used.  For example, $\pi_b$ can be a fixed policy~\citep{AntSzeMun08}, which is the case we will analyze in the next section.

\begin{algorithm}[t] 
\caption{{\small Online \algabb~learning with experience replay}} \label{alg:sbeed}
  \begin{algorithmic}[1]
    \STATE Initialize $w=(w_V,w_\pi,w_\rho)$ and $\pi_b$ randomly, set $\epsilon$.
    \FOR{episode $i=1,\ldots, T$}
      \FOR{size $k=1,\ldots, K$}
        \STATE Add new transition $(s, a, r, s')$ into $\Dcal$ by executing behavior policy $\pi_b$.
      \ENDFOR
      \FOR{iteration $j=1, \ldots, N$}
          \STATE Update $w_\rho^{j}$ by solving 
          \vspace{-3mm}
          $$\min_{w_\rho}~\widehat\EE_{\cbr{s, a, s'}\sim\Dcal}\sbr{\rbr{{\delta(s,a, s')}- \rho(s, a)}^2}.$$
          \vspace{-3mm}
          \STATE Decay the stepsize $\zeta_j$ in rate $\Ocal(1/j)$.
            \STATE Compute the stochastic gradients w.r.t. $w_V$ and $w_\pi$ as $\widehat \nabla_{w_V} \bar\ell(V, \pi)$ and $\widehat \nabla_{w_\pi} \bar\ell(V, \pi)$.
            \STATE Update the parameters of primal function by solving the prox-mappings, \ie,
            \vspace{-2mm}
            \begin{eqnarray*}
            &&\text{update $V$: }\quad w^j_V = P_{w^{j-1}_V}(\zeta_j \widehat \nabla_{w_V} \bar\ell(V, \pi))\\[-2mm]
            &&\text{update $\pi$: }\quad w^j_\pi = P_{w^{j-1}_\pi}(\zeta_j \widehat \nabla_{w_\pi} \bar\ell(V, \pi))
            \end{eqnarray*}
            \vspace{-6mm}
        \ENDFOR
        \STATE Update behavior policy $\pi_b = \pi^N$.
    \ENDFOR
  \end{algorithmic}
\end{algorithm}

\paragraph{Remark (Role of dual variables):}  The dual variable is obtained by solving
{\small
\begin{equation*}
\min_{\rho} \EE_{s, a, s'}\sbr{\rbr{{R(s, a) + \gamma V(s') - \lambda \log \pi(a|s)} - \rho(s, a)}^2}\,.
\end{equation*}
}
The solution to this optimization problem is
$$
\rho^*(s, a) = R(s, a) + \gamma \EE_{s'|s, a}\sbr{V(s')} - \lambda \log \pi(a|s)\,.
$$
Therefore, the dual variables try to approximate the one-step smoothed Bellman backup values, given a $(V,\pi)$ pair.  Similarly, in the equivalent \eq{eq:dual_simplified_smooth_bellman}, the optimal dual variable $\nu(s,a)$ is to fit the one-step smoothed Bellman error. Therefore, each iteration of \algabb\ could be understood as first fitting a parametric model to the one-step Bellman backups (or equivalently, the one-step Bellman error), and then applying stochastic mirror descent to adjust $V$ and $\pi$. 

\paragraph{Remark (Connection to TRPO and NPG):} The update of $w_\pi$ is related to trust region policy optimization~(TRPO)~\citep{SchLevAbbJoretal15} and natural policy gradient~(NPG)~\citep{Kakade02,RajLowTodKak17} when $D_\pi$ is the KL-divergence. Specifically, in~\citet{Kakade02} and \citet{RajLowTodKak17}, $w_\pi$ is update by $\argmin_{w_\pi} \EE\sbr{\inner{w_\pi}{\nabla_{w_\pi }\log \pi^t (a|s)A(a, s)}} + \frac{1}{\eta} \KL(\pi_{w_\pi}||\pi_{w^{old}_\pi})$, which is similar to $P_{w^{j-1}_\pi}$ with the difference in replacing the $\log \pi^t (a|s)A(a, s)$ with our gradient.  In~\citet{SchLevAbbJoretal15}, a related optimization with hard constraints is used for policy updates:
$\min_{w_\pi} \EE\sbr{\pi(a|s)A(a, s)}$, such that $\KL(\pi_{w_\pi}||\pi_{w^{old}_\pi})\le \eta$.
Although these operations are similar to $P_{w^{j-1}_\pi}$, we emphasize that the estimation of the advantage function, $A(s, a)$, and the update of policy are separated in NPG and TRPO. Arbitrary policy evaluation algorithm can be adopted for estimating the value function for \emph{current} policy. While in our algorithm, ${(1 - \eta)\delta(s, a) + \eta\rho^*(s, a) - V(s)}$ is different from the vanilla advantage function, which is designed appropriate for off-policy particularly, and the estimation of $\rho(s, a)$ and $V(s)$ is also integrated as the whole part.

\section{Theoretical Analysis}\label{sec:analysis}

In this section, we give a theoretical analysis for our algorithm in the same setting of~\citet{AntSzeMun08} where samples are prefixed and from \emph{one single $\beta$-mixing off-policy sample path}. For simplicity, we consider the case that applying the algorithm for $\eta = 1$ with the equivalent optimization~\eq{eq:dual_simplified_smooth_bellman}. The analysis is applicable to~\eq{eq:variance_reduction} directly. There are three groups of results. First, in \secref{sec:optimization error}, we show that under appropriate choices of stepsize and prox-mapping,  SBEED converges to a stationary point of the finite-sample approximation (i.e., empirical risk) of the optimization \eq{eq:dual_simplified_smooth_bellman}.  Second, in \secref{sec:statistical error}, we analyze generalization error of \algabb.  Finally,  in \secref{sec:main}, we give an overall performance bound for the algorithm, by combining four sources of errors: (i) optimization error, (ii) generalization error, (iii) bias induced by Nesterov smoothing, and (iv) approximation error induced by using function approximation. 

\paragraph{Notations.} 
Denote by $\Vcal_w,\, \Pcal_w$ and $\Hcal_w$ the parametric function classes of value function $V$, policy $\pi$, and dual variable $\nu$, respectively. Denote the total number of steps in the given off-policy trajectory as $T$. We summarize the notations for the objectives after parametrization and finite-sample approximation and their corresponding optimal solutions in the table for reference:  

\begin{table}[h]
\center
\begin{tabular}{|c||c|c|c|}
\hline
& minimax obj.& primal obj. &optimum\\
\hline
original &$L(V,\pi;\nu)$ & $\ell(V,\pi)$ & $(V_\lambda^*,\pi_\lambda^*)$\\
parametric&$L_w(V_w,\pi_w;\nu_w)$& $\ell_w(V_w,\pi_w)$ & $(V_w^*,\pi_w^*)$ \\
empirical&$\widehat L_T(V_w,\pi_w;\nu_w)$& $\widehat{\ell}_T(V_w,\pi_w)$ & $(\widehat V_w^*,\widehat \pi_w^*)$ \\
\hline
\end{tabular}
\end{table}
Denote the $L_2$ norm of a function $f$ w.r.t. $\mu(s)\pi_b(a|s)$ by $\nbr{f}^2 \defeq \int f(s, a)^2\mu(s)\pi_b(a|s)dsda$. We introduce a scaled norm : 
$
\nbr{V}^2_{\mu\pi_b} = \int \rbr{\gamma \EE_{s'|s, a}\sbr{V(s')} - V(s)}^2 \mu(s)\pi_b(a|s)dsda;
$
for value function; this is indeed a well-defined norm since $\nbr{V}^2_{\mu\pi_b} = \nbr{(\gamma P - I)V}_2^2$ and $I - \gamma P$ is injective. 

\subsection{Convergence Analysis}\label{sec:optimization error} 

It is well-known that for convex-concave saddle point problems, applying stochastic mirror descent ensures global convergence in a sublinear rate~\citep{NemJudLanSha09}. However, this no longer holds for problems without convex-concavity. Our SBEED algorithm, on the other hand, can be regarded as a special case of the stochastic mirror descent algorithm for solving the non-convex primal minimization problem $\min_{V_w, \pi_w} \widehat\ell_{T}(V_w,\pi_w)$. The latter was proven to converge sublinearly to a stationary point when stepsize is diminishing and Euclidean distance is used for the prox-mapping~\citep{GhaLan13}. For completeness, we list the result below.
\begin{theorem}[Convergence, \citet{GhaLan13}]\label{thm:convergence_opt}
Consider the case when Euclidean distance is used in the algorithm. Assume that the parametrized objective $\widehat\ell_{T}(V_w,\pi_w)$ is $K$-Lipschitz and variance of its stochastic gradient is bounded by $\sigma^2$. Let the algorithm run for $N$ iterations with stepsize $\zeta_k=\min\{\frac{1}{K}, \frac{D'}{\sigma\sqrt{N}}\}$ for some $D'>0$ and output $w^1,\ldots, w^N$. Setting the candidate solution to be $(\widehat V_w^N,\widehat{\pi}_w^N)$ with $w$ randomly chosen from $w^1,\ldots, w^N$ such that $P(w=w^j)=\frac{2\zeta_j-K\zeta_j^2}{\sum_{j=1}^N(2\zeta_j-K\zeta_j^2)}$, then it holds that
$\EE\sbr{\nbr{\nabla \widehat\ell_T(\widehat V_w^N,\widehat{\pi}_w^N)}^2}\leq \frac{K D^2}{N}+ (D'+\frac{D}{D'})\frac{\sigma}{\sqrt{N}}$
where $D:=\sqrt{2(\widehat\ell_T(V_w^1,\pi_w^1) -\min \widehat\ell_T(V_w,\pi_w))/K}$ represents the distance of the initial solution to the optimal solution. 
\end{theorem}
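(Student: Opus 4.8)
The plan is to recognize that the quoted statement is a specialization of the convergence theorem for the randomized stochastic gradient method of \citet{GhaLan13} to our Algorithm~\ref{alg:primal_dual_control}, so the work reduces to two parts: (i) verifying that one iteration of the algorithm is a genuine unbiased stochastic gradient step on the non-convex objective $\ell_{w,T}(V,\pi)=\max_{\nu}L_{w,T}(V,\pi;\nu)$, and (ii) reproducing the standard descent/telescoping argument for non-convex SGD.

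For (i), the key observation is that line~7 of Algorithm~\ref{alg:primal_dual_control} solves the inner least-squares problem \emph{exactly}. Since $L_{w,T}(V,\pi;\cdot)$ is a (strongly) concave quadratic in $\nu$ (equivalently $\rho$), its maximizer $\rho^{*}$ is unique, and by the envelope (Danskin) theorem $\nabla_{w}\ell_{w,T}(V,\pi)=\nabla_{w}L_{w,T}(V,\pi;\rho^{*})$, i.e. the $\rho$-dependence contributes nothing to the gradient at the optimum. Hence the estimators $\widehat\nabla_{w_V}\bar\ell$ and $\widehat\nabla_{w_\pi}\bar\ell$ of Theorem~\ref{thm:gradient_estimator}, evaluated at the exact $\rho^{*}$ and on a fresh minibatch drawn from $\Dcal$, are unbiased for $\nabla_{w_V}\ell_{w,T}$ and $\nabla_{w_\pi}\ell_{w,T}$. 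Writing $w=(w_V,w_\pi)$ and noting that the simplex constraint on $\pi$ is absorbed into the parametrized family $\Pcal_w$, the prox-mapping with Euclidean $D_V,D_\pi$ becomes exactly the vanilla update $w^{j}=w^{j-1}-\zeta_j\,\widehat\nabla_{w}\ell_{w,T}(w^{j-1})$. The two remaining ingredients are supplied by the hypotheses of the theorem: $\ell_{w,T}$ is $K$-smooth and the gradient noise $\delta_j:=\widehat\nabla_{w}\ell_{w,T}(w^{j-1})-\nabla_{w}\ell_{w,T}(w^{j-1})$ satisfies $\EE[\delta_j\mid w^{j-1}]=0$ and $\EE[\|\delta_j\|^2\mid w^{j-1}]\le\sigma^2$.

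For (ii), apply the descent lemma for the $K$-smooth $\ell_{w,T}$ along the update $w^{j}=w^{j-1}-\zeta_j G_j$ with $G_j=\nabla_{w}\ell_{w,T}(w^{j-1})+\delta_j$:
\[
\ell_{w,T}(w^{j})\le \ell_{w,T}(w^{j-1})-\Big(\zeta_j-\frac{K\zeta_j^2}{2}\Big)\|\nabla_{w}\ell_{w,T}(w^{j-1})\|^2-\big(\zeta_j-K\zeta_j^2\big)\inner{\nabla_{w}\ell_{w,T}(w^{j-1})}{\delta_j}+\frac{K\zeta_j^2}{2}\|\delta_j\|^2.
\]
Taking total expectation kills the cross term (tower property) and bounds $\EE\|\delta_j\|^2\le\sigma^2$; summing over $j=1,\dots,N$ and telescoping gives $\sum_j(2\zeta_j-K\zeta_j^2)\,\EE\|\nabla_{w}\ell_{w,T}(w^{j-1})\|^2\le 2\big(\ell_{w,T}(V_1,\pi_1)-\min_{w}\ell_{w,T}\big)+K\sigma^2\sum_j\zeta_j^2=KD^2+K\sigma^2\sum_j\zeta_j^2$. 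Dividing by $\sum_j(2\zeta_j-K\zeta_j^2)$ and choosing the randomized output with $P(w=w^j)\propto 2\zeta_j-K\zeta_j^2$ turns the left-hand side into $\EE\|\nabla\ell_{w}(\hat V_w^T,\hat{\pi}_w^T)\|^2$. Substituting the constant step $\zeta_j=\zeta=\min\{1/K,\,D'/(\sigma\sqrt N)\}$: since $\zeta\le 1/K$ implies $2\zeta-K\zeta^2\ge\zeta$, the denominator is at least $N\zeta$, and then using $1/\zeta\le K+\sigma\sqrt N/D'$ together with $\zeta\le D'/(\sigma\sqrt N)$ produces the stated $\Ocal(1/N)+\Ocal(\sigma/\sqrt N)$ bound with the $KD^2/N$ and $(D'+D/D')\sigma/\sqrt N$ terms.

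I expect the only real subtlety to be step (i): one must argue carefully that solving the dual subproblem \emph{exactly} is precisely what makes the primal gradient estimator unbiased and legitimizes treating the whole scheme as plain non-convex SGD on $\ell_{w,T}$ — any finite-precision dual solve, or mismatch between $\rho^{*}$ and its parametrized approximation $\Hcal_w$, is exactly what is absorbed into the term $\epsilon_{\mathrm{opt}}$ of Theorem~\ref{thm:total_error} rather than handled here. The remaining assumptions ($K$-smoothness of $\ell_{w,T}$ and bounded gradient variance) are taken as given in the hypothesis, and the descent-lemma telescoping algebra is entirely routine.
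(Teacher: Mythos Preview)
The paper does not prove this theorem; it is stated verbatim as a citation of Corollary~2.2 in \citet{GhaLan13} (``For completeness, we list the result below''), and no argument is given anywhere in the body or appendix. Your proposal therefore goes strictly beyond what the paper does: you supply the proof that the paper simply imports.

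Your two-part strategy is the right one and is correct. Part~(i) is the only place where something specific to this paper enters, and you have identified it precisely: because line~7 solves the inner concave quadratic exactly, Danskin's theorem makes $\nabla_w\ell_{w,T}$ well defined and equal to $\nabla_w L_{w,T}(\cdot;\rho^*)$, so the estimators of Theorem~\ref{thm:gradient_estimator} are unbiased and the Euclidean prox-mapping collapses to a plain SGD step. The paper implicitly relies on exactly this reduction when it writes ``since our algorithm solves exactly the dual maximization problem at each iteration \ldots\ it can be essentially regarded as a special case of the stochastic mirror descent algorithm applied to solve the non-convex minimization problem $\min\ell_{w,T}(V,\pi)$.'' Part~(ii) is the standard Ghadimi--Lan descent-lemma/telescoping argument, which you reproduce accurately; the factor-of-two bookkeeping between $(\zeta_j-\tfrac{K}{2}\zeta_j^2)$ and $(2\zeta_j-K\zeta_j^2)$ is handled correctly, and the final step-size substitution is routine (the paper's displayed bound appears to contain minor typographical slips in the exponent of $K$ and in writing $\|\cdot\|$ rather than $\|\cdot\|^2$, but that is on the paper, not on you).
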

The above result implies that the algorithm converges sublinearly to a stationary point, whose rate will depend on the smoothing parameter. 

In practice, once we parametrize the dual function, $\nu$ or $\rho$, with neural networks, we cannot achieve the optimal parameters. However, we can still achieve convergence by applying the stochastic gradient descent to a (statistical) local Nash equilibrium asymptotically. We provided the detailed~\algref{alg:sbeed_dual_nn} and the convergence analysis in Appendix~\ref{appsec:convergence analysis}.

\subsection{Statistical Error}\label{sec:statistical error}
In this section, we characterize the statistical error, namely, $\epsilon_{\text{stat}}(T):=\ell_w(\widehat V_w^*, \widehat{\pi}_w^*) - \ell_w(V_w^*, \pi_w^*)$, induced by learning with finite samples. We first make the following standard assumptions about the MDPs:
\begin{assumption}[MDP regularity]\label{asmp:mdp_reg}
Assume $\nbr{R(s, a)}_\infty\le C_R$ and that there exists an optimal policy, $\pi_\lambda^*(a|s)$, such that $\nbr{\log\pi_\lambda^*(a|s)}_\infty\le C_\pi$. 
\end{assumption}
\begin{assumption}[Sample path property, ~\citet{AntSzeMun08}]\label{asmp:sample_measure}
Denote $\mu(s)$ as the stationary distribution of behavior policy $\pi_b$ over the MDP. We assume $\pi_b(a|s) > 0$, $\forall \rbr{s, a}\in \Scal\times\Acal$, and the corresponding Markov process $P^{\pi_b}(s'|s)$ is ergodic. We further assume that $\cbr{s_i}_{i=1}^T$ is strictly stationary and exponentially $\beta$-mixing with a rate defined by the parameters $\rbr{ b, \kappa}$\footnote{A $\beta$-mixing process is said to mix at an exponential rate with parameter $b,\kappa>0$ if $\beta_m=O(\exp(-bm^{-\kappa}))$.}.
\end{assumption}
Assumption~\ref{asmp:mdp_reg} ensures the solvability of the MDP and boundedness of the optimal value functions, $V^*$ and $\Vtil^*$. Assumption~\ref{asmp:sample_measure} ensures $\beta$-mixing property of the samples $\cbr{\rbr{s_i, a_i, R_i}}_{i=1}^T$ (see e.g., Proposition 4 in~\citet{CarChe02}) and is often necessary to prove large deviation bounds.

Invoking a generalized version of Pollard's tail inequality to $\beta$-mixing sequences and prior results in \citet{AntSzeMun08} and \citet{Haussler95}, we show that
\begin{theorem}[Statistical error]\label{thm:statistical_error} Under Assumption 2, it holds with at least probability $1 - \delta$,
$$
\epsilon_{\text{stat}}(T)\le 2\sqrt\frac{M\rbr{\max\rbr{M/b, 1}}^{1 / \kappa}}{C_2T}, 
$$
where $M, C_2$ are some constants. 
\end{theorem}
Detailed proof can be found in \appref{appsec:statistical error}.

\subsection{Error Decomposition}\label{sec:main}

As one shall see, the error between $(\widehat V_w^N,\widehat w^N)$ (optimal solution to the finite sample problem) and the true optimal $(V^*,\pi^*)$ to the Bellman equation consists three parts: i) the error introduced by smoothing, which has been characterized in Section~\ref{subsec:smoothing}, ii) the approximation error, which is tied to the flexibility of the parametrized function classes  $\Vcal_w,\, \Pcal_w$, $\Hcal_w$, and iii) the statistical error. More specifically, we arrive at the following explicit decomposition: 

Denote $\epsilon_{\text{app}}^\pi:= \sup_{\pi\in \Pcal}\inf_{\pi'\in\Pcal_w}\nbr{\pi - \pi'}_\infty$  as the function approximation error between $\Pcal_w$ and $\Pcal$. Denote $\epsilon_{\text{app}}^V$ and $\epsilon_{\text{app}}^\nu$ as approximation errors for $V$ and $\nu$, accordingly. More specifically, we arrive at 
\begin{theorem}\label{thm:error_decomposition} 
{Under Assumptions 1 and 2, it holds that 
$
\nbr{\widehat V_w^N - V^*}^2_{\mu\pi_b} \le 12(K+C_\infty)\epsilon_{\text{app}}^{\nu} + 2C_\nu(1 + \gamma)\epsilon_{\text{app}}^V(\lambda) + 6 C_\nu\epsilon_{\text{app}}^\pi(\lambda)  + 16\lambda^2 C^2_\pi+ \rbr{2\gamma^2 + 2}\rbr{\frac{\gamma\lambda}{1 - \gamma} H^*}^2 + 2\epsilon_{\text{stat}}(T)+2\nbr{\widehat V_w^N - \widehat V_w^*}^2_{\mu\pi_b},
 $
where $C_\infty = \max\cbr{\frac{C_R}{1 - \gamma} , C_\pi}$ and $C_\nu = \max_{\nu\in \Hcal_w} \nbr{\nu}_2$.
}
\end{theorem}
Detailed proof can be found in Appendix~\ref{appsec:error decomposition}. Ignoring the constant factors, the above results can be simplified as
$$
\textstyle
\nbr{\widehat V_w^N - V^*}^2_{\mu\pi_b} \leq \epsilon_{\text{app}}(\lambda)+ \epsilon_{\text{sm}}(\lambda)+\epsilon_{\text{stat}}(T) + \epsilon_{\text{opt}},
$$
where $\epsilon_{\text{app}}(\lambda): =\Ocal(\epsilon_{\text{app}}^{\nu} + \epsilon_{\text{app}}^{V}(\lambda) +\epsilon_{\text{app}}^\pi(\lambda) )$ corresponds to the approximation error,  $ \epsilon_{\text{sm}}(\lambda):= \Ocal(\lambda^2) $ corresponds to the bias induced by smoothing, and $\epsilon_{\text{stat}}(T):=\Ocal(1/\sqrt{T})$ corresponds to the statistical error.

There exists a delicate trade-off between the smoothing bias and approximation error. Using large $\lambda$ increases the smoothing bias but decreases the approximation error since the solution function space is better behaved. The concrete correspondence between $\lambda$ and $\epsilon_{\text{app}}(\lambda)$ depends on the specific form of the function approximators, which is beyond the scope of this paper. Finally, when the approximation is good enough (i.e., zero approximation error and full column rank of feature matrices), then our algorithm will converge to the optimal value function $V^*$ as $\lambda\to 0$, $N,T\to\infty$.

\section{Related Work}\label{sec:related_work}

One of our main contributions is a provably convergent algorithm when nonlinear approximation is used in the off-policy control case. Convergence guarantees exist in the literature for a few rather special cases, as reviewed in the introduction~\citep{Boyan95Generalization,Gordon95Stable,Tsitsiklis97Analysis,Ormoneit02Kernel, AntSzeMun08,melo08analysis}. Of particular interest is the Greedy-GQ algorithm~\citep{Maei10Toward}, who uses two time-scale analysis to shown asymptotic convergence only for \emph{linear} function approximation in the controlled case. However, it does not take the true gradient estimator in the algorithm, and the update of policy may become intractable when the action space is continuous.

Algorithmically, our method is most related to RL algorithms with entropy-regularized policies. Different from the motivation in our method where the entropy regularization is introduced in dual form for smoothing~\citep{Nesterov05}, the entropy-regularized MDP has been proposed for exploration~\citep{Defarias00,HaaTanAbbLev17}, taming noise in observations~\citep{RubShaTis12,FoxPakTis15}, and ensuring tractability~\citep{Todorov07}. Specifically, \citet{FoxPakTis15} proposed soft Q-learning for the tabular case, but its extension to the function approximation case is hard, as the summation operation in $\log$-sum-$\exp$ of the update rule becomes a computationally expensive integration. To avoid such a difficulty, \citet{HaaTanAbbLev17} approximate the integral via Monte-Carlo method with Stein variational gradient descent sampler, but limited theory is provided. Another related algorithm is developed by~\citet{AsaLit16} for the tabular case, which resembles SARSA with a particular policy; also see \citet{LiuRamLiuPen17} for a Bayesian variant. Observing the duality connection between soft Q-learning and maximum entropy policy optimization, \citet{NeuJonGom17} and \citet{SchAbbChe17} investigate the equivalence between these two types of algorithms.

Besides the difficulty to generalize these algorithms to multi-step trajectories in off-policy setting, the major drawback of these algorithms is the lack of theoretical guarantees when accompanying with function approximators. It is not clear whether the algorithms converge or not, do not even mention the quality of the stationary points. That said, \citet{NacNorXuSch17a,NacNorXuSch17b} also exploit the consistency condition in Theorem~\ref{thm:smoothed_bellman_opt_pi} and propose the PCL algorithm which optimizes the upper bound of the mean squared consistency Bellman error~\eq{eq:simplified_smooth_bellman_loss}. The same consistency condition is also discovered in~\citet{RawTouVij12}, and the proposed $\Phi$-learning algorithm can be viewed as fix-point iteration version of the the unified PCL with tabular $Q$-function. However, as we discussed in Section~\ref{sec:stoc_primal_dual}, the PCL algorithms becomes biased in stochastic environment, which may lead to inferior solutions~\citet{Baird95}.

Several recent works~\citep{ChenWang16,Wang17,Dai18boosting} have also considered saddle-point formulations of Bellman equations, but these formulations are fundamentally different from ours.  These saddle point problems are derived from the \emph{Lagrangian} dual of the linear programming formulation of Bellman equations~\citep{schweitzer85generalized,de2003linear}.  In contrast, our formulation is derived from the Bellman equation directly using \emph{Fenchel} duality/transformation.  It would be interesting to investigate the connection between these two saddle-point formulations in future work.

\section{Experiments}\label{sec:experiments}

The goal of our experimental evalation is two folds: {\bf i)}, to better understand of the effect of each algorithmic component in the proposed algorithm, and {\bf ii)}, to demonstrate the stability and efficiency of \algabb\ in both \emph{off-policy} and \emph{on-policy} settings. Therefore, we conducted an ablation study on~\algabb, and a comprehensive comparison to state-of-the-art reinforcement learning algorithms.  While we derive and present \algabb\ for single-step Bellman error case, it can be extended to multi-step cases (Appendix~\ref{appendix:multi_step}).  In our experiment, we used this multi-step version.

\subsection{Ablation Study}

To get a better understanding of the trade-off between the variance and bias, including both the bias from the smoothing technique and the introduction of the function approximator, we performed ablation study in the Swimmer-v1 environment with \emph{stochastic} transition by varying the coefficient for entropic regularization $\lambda$ and the coefficient of the dual function $\eta$ in the optimization~\eq{eq:one_step_trade_off}, as well as the number of the rollout steps, $k$.
\begin{figure*}[!t]
\centering
  \begin{tabular}{ccc}
    \includegraphics[width=0.3\textwidth,  trim={0.5cm 0.6cm 0.6cm 1.1cm},clip]{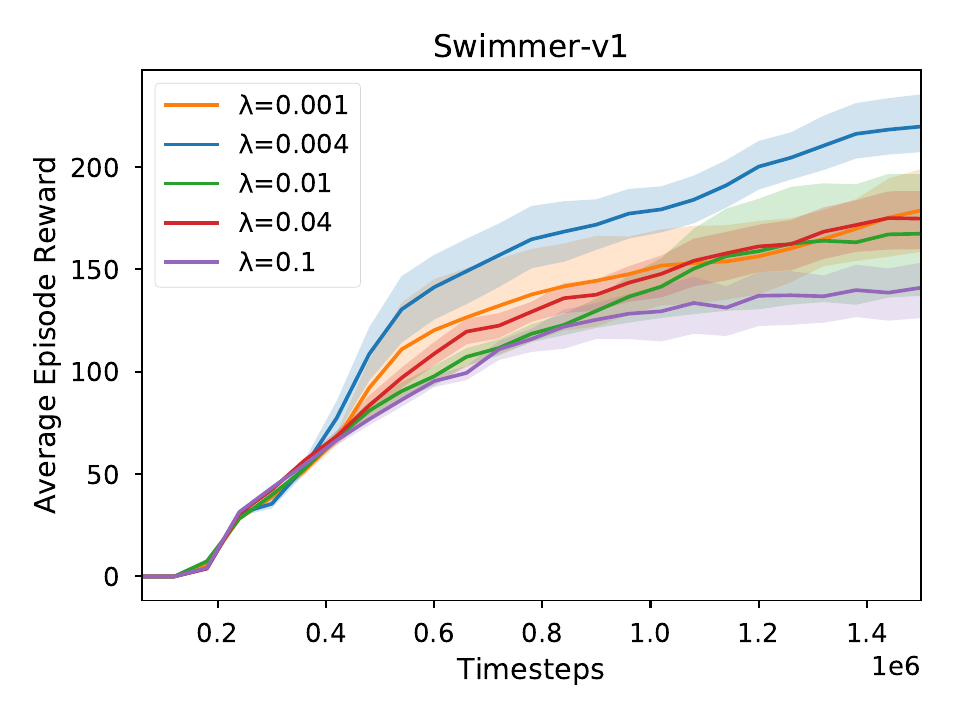}&
    \includegraphics[width=0.3\textwidth,  trim={0.5cm 0.6cm 0.6cm 1.1cm},clip]{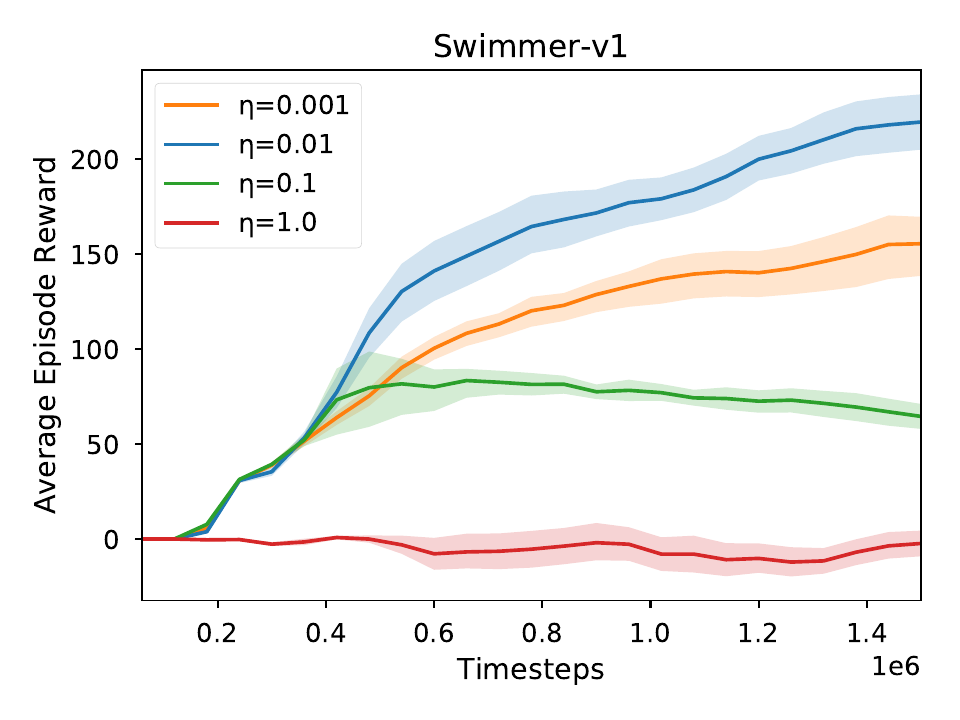}&
    \includegraphics[width=0.3\textwidth,  trim={0.5cm 0.6cm 0.6cm 1.1cm},clip]{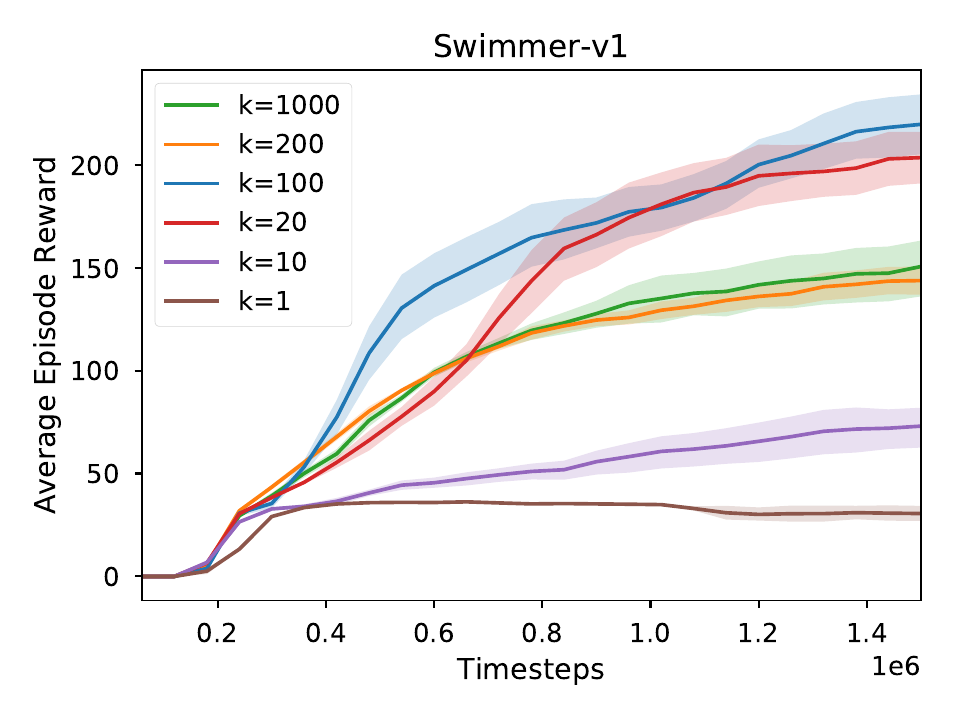}\\ 
    (a) $\lambda$ for entropy &(b) $\eta$ for dual  &(c) $k$ for steps\\
  \end{tabular}
  \caption{The ablation study of the~\algabb~on Swimmer-v1. We varied $\lambda$, $\eta$, and $k$ to justify the effect of each component in the algorithm.}
  \label{fig:ablation_study}
\end{figure*}
\begin{figure*}[!t]
\centering
  \begin{tabular}{cc}
    \includegraphics[width=0.3\textwidth,  trim={0.5cm 0.6cm 0.6cm 1.1cm},clip]{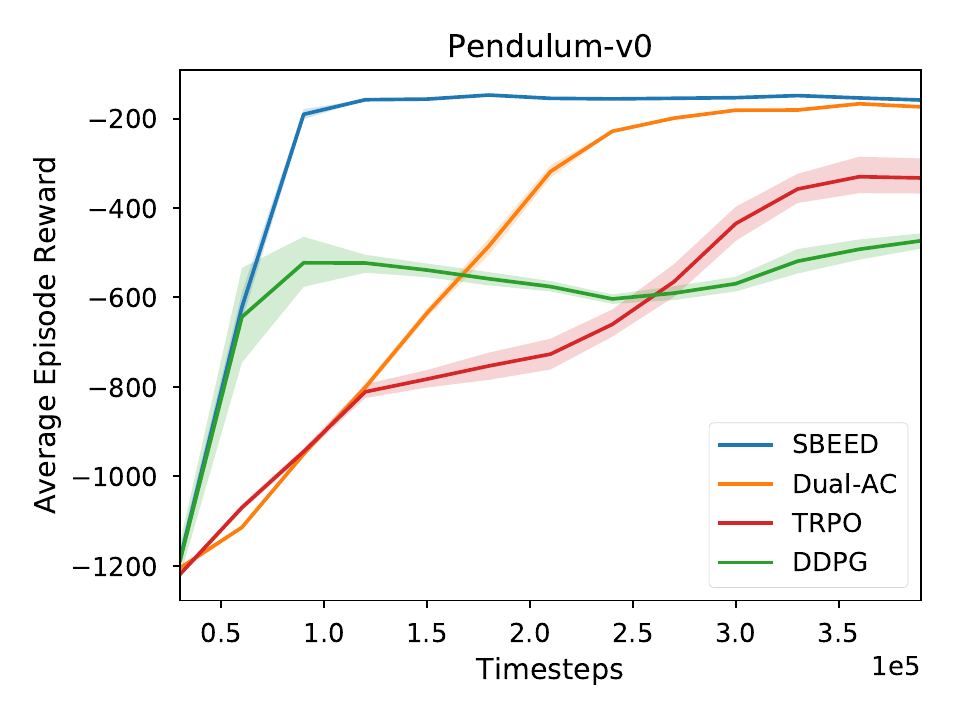}&
    \includegraphics[width=0.3\textwidth,  trim={0.5cm 0.6cm 0.6cm 1.1cm},clip]{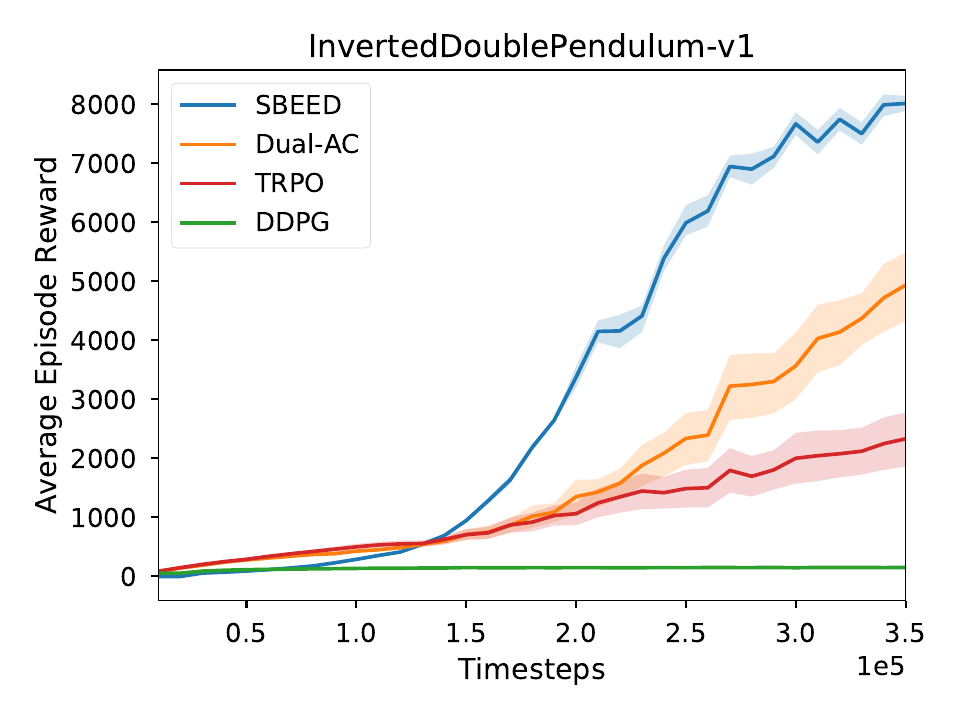}\\
    (a) Pendulum &(b) InvertedDoublePendulum \\
    \vspace{2mm}
  \end{tabular}
  \begin{tabular}{ccc}
    \includegraphics[width=0.3\textwidth,  trim={0.5cm 0.6cm 0.6cm 1.1cm},clip]{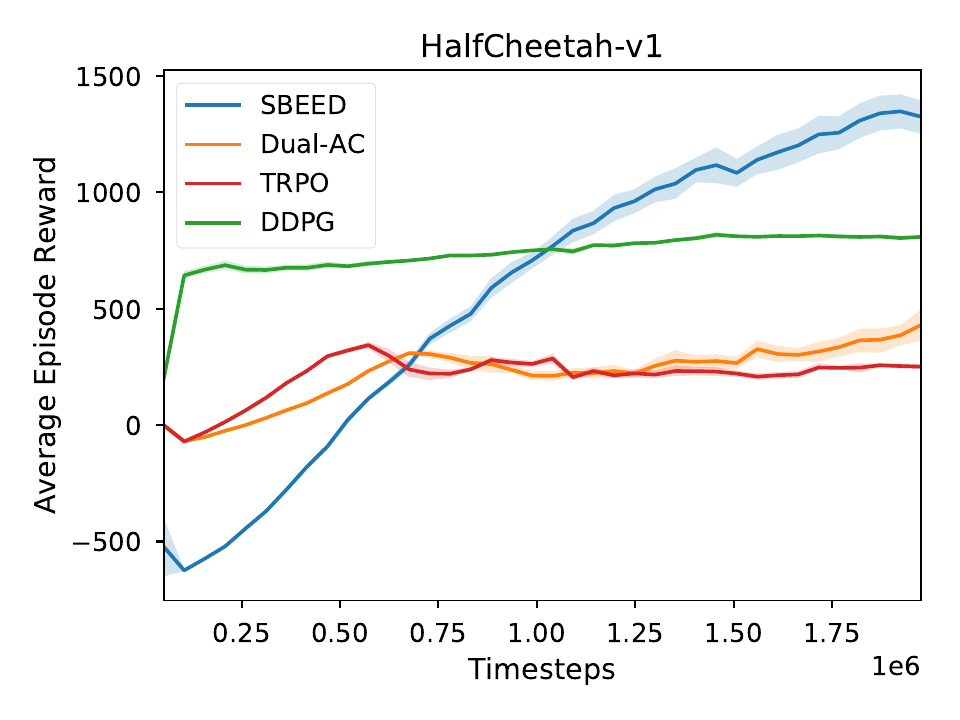}&
    \includegraphics[width=0.3\textwidth,  trim={0.5cm 0.6cm 0.6cm 1.1cm},clip]{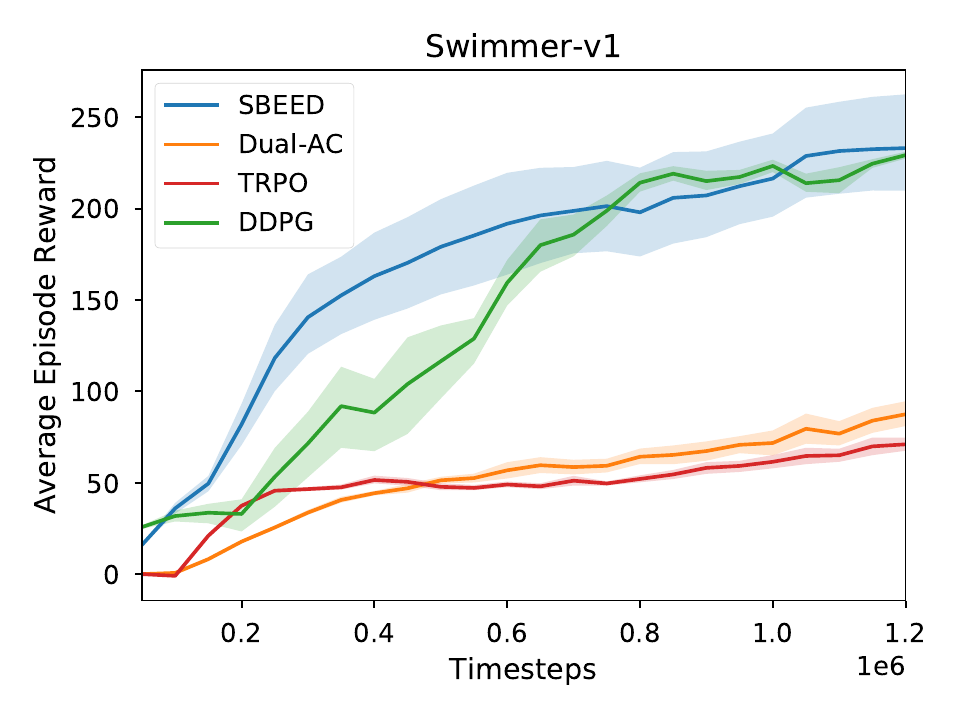}&
    \includegraphics[width=0.3\textwidth,  trim={0.5cm 0.6cm 0.6cm 1.1cm},clip]{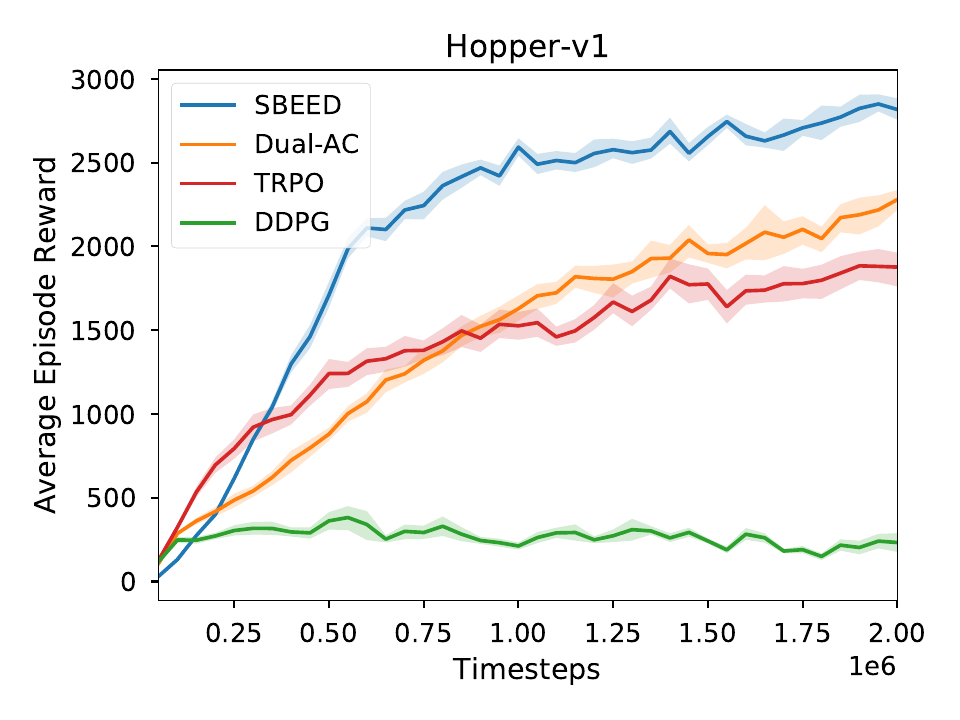}    \\ 
    (c) HalfCheetah &(d) Swimmer &(e) Hopper\\
  \end{tabular}
  \caption{The results of \algabb~against TRPO, Dual AC and DDPG. Each plot shows average reward during training across $5$ random runs, with $50\%$ confidence interval. The x-axis is the number of training iterations. \algabb~achieves significant better performance than the competitors on all tasks. 
  }
  \label{fig:offpolicy_comparison}
\end{figure*}

\paragraph{The effect of smoothing.} We introduced the entropy regularization to avoid non-smoothness in Bellman error objective. However, as we discussed, it also introduces bias. We varied $\lambda$ and evaluated the performance of \algabb. The results in Figure~\ref{fig:ablation_study}(a) are as expected: there is indeed an intermediate value for $\lambda$ that gives the best bias/smoothness balance.

\paragraph{The effect of dual function.} One of the important components in our algorithm is the dual function, which cancels the variance. The effect of such cancellation is controlled by $\eta\in[0, 1]$, and we expected an intermediate value gives the best performance. This is verified by the experiment of varying $\eta$, as shown in Figure~\ref{fig:ablation_study}(b). 

\paragraph{The effect of multi-step.} \algabb\ can be easily extended to the multi-step version. However, increasing the length of steps will also increase the variance. We tested the performance of the algorithm with different $k$ values. The results shown in Figure~\ref{fig:ablation_study}(c) confirms that an intermediate value for $k$ yields the best result.

\subsection{Comparison in Continuous Control Tasks}

We tested \algabb\ across multiple continuous control tasks from the OpenAI Gym benchmark~\citep{BroChePetSchetal16} using the MuJoCo simulator~\citep{TodEreTas12}, including Pendulum-v0, InvertedDoublePendulum-v1, HalfCheetah-v1, Swimmer-v1, and Hopper-v1. For fairness, we follows the default setting of the MuJoCo simulator in each task in this section. These tasks have dynamics of different natures, so are helpful for evaluating the behavior of the proposed~\algabb~in different scenarios. We compared \algabb\ with several state-of-the-art algorithms, including two on-policy algorithms, trust region policy optimization~(TRPO)~\citep{SchLevAbbJoretal15} dual actor-critic~(Dual AC)~\citep{Dai18boosting}, and one off-policy algorithm, deep deterministic policy gradient~(DDPG)~\citep{Lillicrap15Continuous}.  We did not include PCL~\citep{NacNorXuSch17a} as it is a special case of our algorithm by setting $\eta = 0$, \ie, ignoring the updates for dual function. Since TRPO and Dual-AC are only applicable for on-policy setting, for fairness, we also conducted the comparison with these two algorithm in on-policy setting. Due to the space limitation, the results can be found in Appendix~\ref{appendix:more_exp}.

We ran the algorithm with $5$ random seeds and reported the average rewards with 50\% confidence intervals. The results are shown in Figure~\ref{fig:offpolicy_comparison}. We can see that our \algabb\ algorithm achieves significantly better performance than all other algorithms across the board. These results suggest that the~\algabb~can exploit the off-policy samples efficiently and stably, and achieve a good trade-off between bias and variance.

It should be emphasized that the stability of algorithm is an important issue in reinforcement learning. As we can see from the results, although DDPG can also exploit the off-policy sample, which promotes its efficiency in stable environments, \eg, HalfCheetah-v1 and Swimmer-v1, it may fail to learn in unstable environments, \eg, InvertedDoublePendulum-v1 and Hopper-v1, which was observed by~\citet{HenIslBacPinetal17} and \citet{Haarnoja2018soft}. In contrast, \algabb\ is consistently reliable and effective in different tasks.

\section{Conclusion}\label{sec:conclusion}

We provide a new optimization perspective of the Bellman equation, based on which we develop the new \algabb\ algorithm for policy optimization in reinforcement learning. The algorithm is \emph{provably convergent} even when \emph{nonlinear} function approximation is used on \emph{off-policy} samples.  We also provide PAC bound to characterize the sample complexity based on \emph{one single off-policy sample path} collected by a fixed behavior policy.  Empirical study shows the proposed algorithm achieves superior performance across the board, compared to state-of-the-art baselines on several MuJoCo control tasks.

\section*{Acknowledgments}
Part of this work was done during BD's internship at Microsoft Research, Redmond. Part of the work was done when LL and JC were with Microsoft Research, Redmond. We thank Mohammad Ghavamzadeh, Csaba Szepesvari, and Greg Tucker for their insightful comments and discussions. NH is supported by NSF CCF-1755829. LS is supported in part by NSF IIS-1218749, NIH BIGDATA 1R01GM108341, NSF CAREER IIS-1350983, NSF IIS-1639792 EAGER, NSF CNS-1704701, ONR N00014-15-1-2340, Intel ISTC, NVIDIA and Amazon AWS.



\clearpage
\newpage

\onecolumn

\appendix

\begin{appendix}

\begin{center}
{\huge \textbf{Appendix}}
\end{center}

\section{Properties of Smoothed Bellman Operator}\label{appendix:alg_derivation_proof}

After applying the smoothing technique~\citep{Nesterov05}, we obtain a new Bellman operator, $\widetilde\Tcal$, which is contractive.  By such property, we can guarantee the uniqueness of the solution; a similar result is also presented in~\citet{FoxPakTis15,AsaLit16}.

\noindent\textbf{Proposition~\ref{thm:contraction}~(Contraction)}
\textit{$\Tcal_\lambda$ is a $\gamma$-contraction.  Consequently, the corresponding smoothed Bellman equation~\eq{eq:smoothed_bellman_opt_dual}, or equivalently \eq{eq:smoothed_bellman_opt_dual2}, has a unique solution $\Vtil^*$.}

\begin{proof}
For any $V_1, V_2:\Scal \to \RR$, we have
\begin{eqnarray*}
&&\nbr{\widetilde\Tcal V_1 - \widetilde\Tcal V_2}_\infty \\
&=& \nbr{\max_\pi \cbr{\inner{\pi}{R(s, a) + \gamma \EE_{s'|s, a}\sbr{V_1(s')}} + \lambda H(\pi)} - \max_\pi \cbr{\inner{\pi}{R(s, a) + \gamma \EE_{s'|s, a}\sbr{V_2(s')}} + \lambda H(\pi)}}_\infty\\
&\le& \nbr{\max_\pi \cbr{ \inner{\pi}{R(s, a) + \gamma \EE_{s'|s, a}\sbr{V_1(s')}} + \lambda H(\pi) - \inner{\pi}{R(s, a) + \gamma \EE_{s'|s, a}\sbr{V_2(s')}} - \lambda H(\pi)}} \\
&\le& \nbr{\max_\pi \inner{\pi}{\gamma \EE_{s'|s, a}\sbr{V_1(s') - V_2(s')}} }_\infty\\
&\le& \gamma \nbr{V_1 - V_2}_\infty\,.
\end{eqnarray*}
$\Tcal_\lambda$ is therefore a $\gamma$-contraction and, by the Banach fixed point theorem, admits a unique fixed point.
\end{proof}

Moreover, we may characterize the bias introduced by the entropic smoothing, similar to the simulation lemma (see, e.g., \citet{Kearns02Near} and \citet{Strehl09Reinforcement}):

\noindent\textbf{Proposition~\ref{lemma:smooth_bias}~(Smoothing bias)}
\textit{Let $V^*$ and $\Vtil^*$ be the fixed points of~\eq{eq:bellman_opt_dual} and~\eq{eq:smoothed_bellman_opt_dual}, respectively. It holds that
\begin{equation*}
\nbr{V^* - \Vtil^*}_\infty \le \frac{\lambda H^*}{1 - \gamma}.
\end{equation*}
As $\lambda\to 0$, $\Vtil^*$ converges to $V^*$ pointwisely.
}

\begin{proof}
Using the triangle inequality and the contraction property of $\Tcal_\lambda$, we have
\begin{eqnarray*}
\nbr{V^*-\Vtil^*}_\infty &=& \nbr{\Tcal V^* - \Tcal_\lambda \Vtil^*}_\infty \\
&=& \nbr{V^* - \Tcal_\lambda V^* + \Tcal_\lambda V^* - \Tcal_\lambda \Vtil^*}_\infty \\
&\le& \nbr{V^* - \Tcal_\lambda V^*}_\infty + \nbr{\Tcal_\lambda V^* - \Tcal_\lambda \Vtil^*}_\infty \\
&\le& \lambda H^* + \gamma \nbr{V^*-\Vtil^*}_\infty\,,
\end{eqnarray*}
which immediately implies the desired bound.
\end{proof}

The smoothed Bellman equation involves a $\log$-$\text{sum}$-$\exp$ operator to approximate the $\max$-operator, which increases the nonlinearity of the equation. We further characterize the solution of the smoothed Bellman equation, by the { temporal consistency} conditions. 

\noindent\textbf{Theorem~\ref{thm:smoothed_bellman_opt_pi}~(Temporal consistency)}
\textit{
Assume $\lambda>0$.  Let $\Vtil^*$ be the fixed point of~\eq{eq:smoothed_bellman_opt_dual} and  $\pi_\lambda^*$ the corresponding policy that attains the maximum on the RHS of~\eq{eq:smoothed_bellman_opt_dual}. Then, $(V,\pi)=(\Vtil^*,\pi_\lambda^*)$ if and only if $(V,\pi)$ satisfies the following equality for all $(s,a)\in\Scal\times\Acal$:
\begin{equation*}
V(s) = R(s, a) + \gamma\EE_{s'|s, a}\sbr{V(s')} - \lambda\log\pi(a|s) \,. \tag*{(7)}
\end{equation*}
}

\begin{proof} The proof has two parts. \\
\noindent\textbf{(Necessity)} 
We need to show $(\Vtil^*,\pi_\lambda^*)$ is a solution to \eqref{eq:smoothed_bellman_opt_pi}.  Simple calculations give the closed form of $\pi_\lambda^*$:
\[
\pi_\lambda^*(a|s) = Z(s)^{-1} \exp\rbr{\frac{R(s,a) + \gamma \EE_{s'|s,a}\sbr{\Vtil^*(s')}}{\lambda}}\,,
\]
where $Z(s) \defeq \sum_{a\in\Acal} \exp\rbr{\frac{R(s,a) + \gamma \EE_{s'|s,a}\sbr{\Vtil^*(s')}}{\lambda}}$ is a state-dependent normalization constant.  Therefore, for any $a\in\Acal$,
\begin{eqnarray*}
&& R(s, a) + \gamma\EE_{s'|s, a}\sbr{\Vtil^*(s')} - \lambda\log\pi_\lambda^*(a|s) \\
&=& R(s, a) + \gamma\EE_{s'|s, a}\sbr{\Vtil^*(s')} - \lambda \rbr{ \frac{R(s,a) + \gamma \EE_{s'|s,a}\sbr{\Vtil^*(s')}}{\lambda} - \log Z(s)} \\
&=& \lambda \log Z(s) \,\,=\,\, \Vtil^*(s)\,,
\end{eqnarray*}
where the last step is from \eqref{eq:smoothed_bellman_opt_dual2}.  Therefore, $(\Vtil^*, \pi_\lambda^*)$ satisfies \eqref{eq:smoothed_bellman_opt_pi}.

\noindent\textbf{(Sufficiency)} Assume $\bar V$ and $\bar\pi$ satisfies~\eq{eq:smoothed_bellman_opt_pi}, then we have for all $(s,a)\in\Scal\times\Acal$ that
\begin{eqnarray*}
\bar V(s) &=& R(s, a) + \gamma \EE_{s'|s, a}\sbr{\bar V(s')} - \lambda\log\bar \pi(a|s) \\
\pi(a|s) &=& \exp\rbr{\frac{R(s, a) + \gamma \EE_{s'|s, a}\sbr{\bar V(s')} - \Vbar(s)}{\lambda}} \,.
\end{eqnarray*}
Recall $\pi(\cdot|s)\in \Pcal$, we have
\begin{eqnarray*}
&&\sum_{a\in\Acal} \exp\rbr{\frac{R(s, a) + \gamma \EE_{s'|s, a}\sbr{\bar V(s')} - \Vbar(s)}{\lambda}} = 1\\
\Rightarrow && \sum_{a\in\Acal} \exp\rbr{\frac{R(s, a) + \gamma \EE_{s'|s, a}\sbr{\bar V(s')}}{\lambda}} = \exp\rbr{\frac{\Vbar(s)}{\lambda}} \\
\Rightarrow && \Vbar(s) = \lambda \log\rbr{\sum_{a\in\Acal}\exp\rbr{\frac{R(s, a) + \gamma \EE_{s'|s, a}\sbr{\Vbar(s')}}{\lambda}}} = \Tcal_\lambda \bar{V}(s)\,.
\end{eqnarray*}
The last equation holds for all $s\in\Scal$, so $\bar{V}$ is a fixed point of $\Tcal$.  It then follows from \propref{thm:contraction} that $\bar{V}=\Vtil^*$.  Finally, $\bar{\pi}=\pi_\lambda^*$ due to strong concavity of the entropy function
\end{proof}

The same conditions have been re-discovered several times, \eg,~\citep{RawTouVij12,NacNorXuSch17a}, from a completely different point of views.

\section{Variance Cancellation via the Saddle Point Formulation}\label{appendix:variance_reduciton}

The second term in the saddle point formulation~\eq{eq:variance_reduction} will cancel the variance $\VV_{s, a, s'}\sbr{\gamma V(s')}$.  Formally, 
\begin{proposition}\label{thm:variance_cancellation}
Given any fixed $(V,\pi)$, we have
\begin{eqnarray}
\max_{\rho\in\Fcal\rbr{\Scal\times \Acal}}-\EE_{s, a, s'}\sbr{\rbr{{R(s, a) + \gamma {V(s') - \lambda\log\pi(a|s)}}- \rho(s, a)}^2} = -\gamma^2 \EE_{s,a}\sbr{\VV_{s'|s,a}\sbr{V(s')}}.
\end{eqnarray}
\end{proposition}
\begin{proof}
Recall from \eqref{eq:variance_reduction} that $\delta(s,a,s') = R(s, a) + \gamma {V(s') - \lambda\log\pi(a|s)}$.  Then,
\begin{eqnarray*}
&&\max_{\rho}\,\,-\EE_{s, a, s'}\sbr{\rbr{{R(s, a) + \gamma {V(s') - \lambda\log\pi(a|s)}}- \rho(s, a)}^2} \\
&=& -\min_{\rho}\,\,\EE_{s,a}\sbr{\EE_{s'|s,a}\sbr{\rbr{\delta(s,a,s') - \rho(s,a)}^2}}\,.
\end{eqnarray*}
Clearly, the minimizing function $\rho^*$ may be determined for each $(s,a)$ entry separately.  Fix any $(s,a)\in\Scal\times\Acal$, and define a function on $\RR$ as $q(x) \defeq \EE_{s'|s,a}\sbr{\rbr{\delta(s,a,s') - x}^2}$.  Obviously, this convex function is minimized at the stationary point $x^*=\EE_{s'|s,a}\sbr{\delta(s,a,s')}$.  We therefore have $\rho^*(s,a) = \EE_{s'|s,a}\sbr{\delta(s,a,s')}$ for all $(s,a)$, so
\begin{eqnarray*}
& & \min_{\rho}\,\,\EE_{s,a}\sbr{\EE_{s'|s,a}\sbr{\rbr{\delta(s,a,s') - \rho(s,a)}^2}} \\
&=& \EE_{s,a}\sbr{\EE_{s'|s,a}\sbr{\rbr{\delta(s,a,s') - \EE_{s'|s,a}\sbr{\delta(s,a,s')}}^2}} \\
&=& \EE_{s,a}\sbr{\VV_{s'|s,a}\sbr{\delta(s,a,s')}} \\
&=& \EE_{s,a}\sbr{\VV_{s'|s,a}\sbr{\gamma V(s')}} \,\,=\,\, \gamma^2 \EE_{s,a}\sbr{\VV_{s'|s,a}\sbr{V(s')}} \,,
\end{eqnarray*}
where the second last step is due to the fact that, conditioned on $s$ and $a$, the only random variable in $\delta(s,a,s')$ is $V(s')$.
\end{proof}

\section{Details of \algabb}\label{appendix:algrithm_derivation}

In this section, we provide further details of the \algabb\ algorithms, including its gradient derivation and multi-step/eligibility-trace extension. 

\subsection{Unbiasedness of Gradient Estimator}

In this subsection, we compute the gradient with respect to the primal variables.  Let $(w_V, w_\pi)$ be the parameters of the primal $(V,\pi)$, and $w_\rho$ the parameters of the dual $\rho$.  Abusing notation a little bit, we now write the objective function $L_\eta(V,\pi;\rho)$ as $L_\eta(w_V, w_\pi; w_\rho)$.  Recall the quantity $\delta(s,a, s')$ from \eqref{eq:variance_reduction}.

\noindent \textbf{Theorem~\ref{thm:gradient_estimator} (Gradient derivation)}
\textit{Define $\bar{\ell}_\eta(w_V, w_\pi) \defeq L_\eta(w_V, w_\pi; w_\rho^*)$, where $w_\rho^* = \arg\max_{w_\rho} L_\eta(w_V, w_\pi; w_\rho)$.  Let $\dsas$ be a shorthand for $\delta(s,a,s')$, and $\hat{\rho}$ be dual parameterized by $w_\rho^*$.  Then,
\begin{align*}
\nabla_{w_V} \bar{\ell}_\eta =& 2 \EE_{s,a,s'}\sbr{\rbr{\dsas - V(s)}\rbr{\gamma { \nabla_{w_V} V(s')} - \nabla_{w_V} V(s)}} - 2 \eta\gamma\EE_{s,a,s'}\sbr{\rbr{\dsas - \hat{\rho}(s, a)}\nabla_{w_V} V(s')}\,, \\
\nabla_{w_\pi} \bar{\ell}_\eta =& -2\lambda\EE_{s, a, s'}\big[\rbr{(1 - \eta)\dsas + \eta\hat{\rho}(s, a) - V(s)} \cdot \nabla_{w_\pi}\log\pi(a|s)\big]\,. 
\end{align*}
}

\begin{proof}
First, note that $w_\rho^*$ is an implicit function of $(w_V,w_\pi)$.  Therefore, we must use the chain rule to compute the gradient:
\begin{eqnarray*} 
\nabla_{w_V} \bar{\ell}_\eta &=& 2\EE_{s,a,s'}\sbr{\rbr{\dsas -V(s;w_V)}\rbr{\gamma { \nabla_{w_V} V(s';w_V)} - \nabla_{w_V} V(s;w_V)}} \\
&& - 2\eta\gamma\EE_{s,a,s'}\sbr{\rbr{\dsas - \rho(s, a; w_\rho^*)} \nabla_{w_V}V(s';w_V)} \\
&& + 2\eta\gamma\EE_{s,a,s'}\sbr{\rbr{\dsas - \rho(s, a; w_\rho^*)}\nabla_{w_V}\rho(s,a;w_\rho^*)}\,.
\end{eqnarray*}
We next show that the last term is zero:
\begin{eqnarray*}
&& \EE_{s,a,s'}\sbr{\rbr{\dsas - \rho(s, a; w_\rho^*)}\nabla_{w_V}\rho(s,a;w_\rho^*)} \\
&=& \EE_{s,a,s'}\sbr{\rbr{\dsas - \rho(s, a; w_\rho^*)} \cdot \nabla_{w_V}w_\rho^* \cdot \nabla_{w_\rho}\rho(s,a;w_\rho^*)} \\
&=& \nabla_{w_V}w_\rho^* \cdot \EE_{s,a,s'}\sbr{\rbr{\dsas - \rho(s, a; w_\rho^*)} \cdot \nabla_{w_\rho}\rho(s,a;w_\rho^*)} \\
&=& \nabla_{w_V}w_\rho^* \cdot \mathbf{0} \,\,=\,\, \mathbf{0}\,,
\end{eqnarray*}
where the first step is the chain rule; the second is due to the fact that $\nabla_{w_V}w_\rho^*$ is not a function of $(s,a,s')$, so can be moved outside of the expectation; the third step is due to the optimality of $w_\rho^*$.  The gradient w.r.t. $w_V$ is thus derived.  The case for $w_\pi$ is similar.
\end{proof}

\subsection{Multi-step Extension}\label{appendix:multi_step}

One way to interpret the smoothed Bellman equation \eqref{eq:smoothed_bellman_opt_dual} is to treat each $\pi(\cdot|s)$ as a (mixture) action; in other words, the action space is now the simplex $\Pacal$.  With this interpretation, the introduced entropy regularization may be viewed as a shaping reward: given a mixture action $\pi(\cdot|s)$, its immediate reward is given by
\[
\tilde{R}(s,\pi(\cdot|s)) \defeq \EE_{a \sim \pi(\cdot|s)} \sbr{R(s,a)} + \lambda H(\pi,s) \,.
\]
The transition probabilities can also be adapted accordingly as follows
\[
\tilde{P}(s'|s,\pi(\cdot|s)) \defeq \EE_{a\in\pi(\cdot|s)}\sbr{P(s'|s,a)} \,.
\]
It can be verified that the above constructions induce a well-defined MDP $\tilde{M} = \langle \Scal, \Pacal, \tilde{P}, \tilde{R}, \gamma \rangle$, whose standard Bellman equation is exactly \eqref{eq:smoothed_bellman_opt_dual}.

With this interpretation, the proposed framework and algorithm can be easily applied to multi-step and eligibility-traces extensions.  Specifically, one can show that $(\Vtil^*,\pi_\lambda^*)$ is the unique solution that satisfies the multi-step expansion of \eq{eq:smoothed_bellman_opt_pi}: for any $k \ge 1$ and any $(s_0,a_0,a_1,\ldots,a_{k-1}) \in \Scal \times \Acal^k$,
\begin{eqnarray}\label{eq:multi_step_smooth_bellman}
V(s_0)= \sum_{t=0}^{k-1} \gamma^t \EE_{s_t|s_0, a_{0:t-1}}\sbr{R(s_t, a_t) - \lambda \log\pi(a_t|s_t)} + \gamma^k \EE_{s_k|s_0, a_{0:k-1}}\sbr{V(s_k)} \,.
\end{eqnarray}
Clearly, when $k=1$ (the single-step bootstrapping case), the above equation reduces to \eq{eq:smoothed_bellman_opt_pi}.

The $k$-step extension of objective function \eqref{eq:simplified_smooth_bellman_loss}  now becomes
\begin{equation*}
\min_{V, \pi} \EE_{s_0, a_{0:k-1}}\sbr{\rbr{\sum_{t=0}^{k-1} \gamma^t \EE_{s_t|s_0, a_{0:t-1}}\sbr{R(s_t, a_t) - \lambda \log\pi(a_t|s_t)} + \gamma^k \EE_{s_k|s_0,a_{0:k-1}}\sbr{V(s_k)}- V(s_0)}^2 }.
\end{equation*}
Applying the Legendre-Fenchel transformation and the interchangeability principle, we arrive at the following multi-step primal-dual optimization problem:
\begin{eqnarray*}
\min_{V, \pi}\max_\nu&& \EE_{s_0, a_{0:t-1}}\bigg[\nu(s_0, a_{0:t-1})\bigg(\sum_{t=0}^{k-1} \gamma^t \EE_{s_t|s_0, a_{0:k-1}}\sbr{R(s_t, a_t) - \lambda \log\pi(a_t|s_t)} \\
&&+ \gamma^k \EE_{s_k|s_0, a_{0:k-1}}\sbr{V(s_k)}- V(s_0)\bigg) \bigg]- \frac{1}{2}\EE_{s_0, a_{0:k-1}}\sbr{\nu(s_0, a_{0:k-1})^2}\\
=\min_{V, \pi}\max_\nu&& \EE_{s_{0:k},a_{0:k-1}}\bigg[\nu(s_0, a_{0:k-1}) \bigg(\sum_{t=0}^{k-1} \gamma^t \rbr{{R(s_t, a_t) - \lambda \log\pi(a_t|s_t)}} \\
&&+ \gamma^k{V(s_k)}- V(s_0)\bigg) \bigg] - \frac{1}{2}\EE_{s_0, a_{0:k-1}}\sbr{\nu(s_0, a_{0:k-1})^2}.
\end{eqnarray*}

Similar to the single-step case, defining
\[
\delta(s_{0:k}, a_{0:k-1}) \defeq \sum_{t=0}^{k-1} \gamma^t \rbr{R(s_t, a_t) - \lambda\log \pi(a_{t}|s_{t})} + \gamma^k{V(s_k)}\,.
\]
and using the substitution $\rho(s_0,a_{0:k-1}) = \nu(s_0,a_{0:k-1})+V(s_0)$, we reach the following saddle-point formulation:
\begin{equation}\label{eq:variance_reduction_kstep}
\min_{V,\pi}\max_\rho L(V,\pi;\rho) \defeq \EE_{s_{0:k}, a_{0:k-1}}\sbr{\rbr{\delta(s_{0:k}, a_{0:k-1})- V(s_0)}^2-\eta\rbr{{\delta(s_{0:k}, a_{0:k-1})}- \rho(s_0, a_{0:k-1})}^2}
\end{equation}
where the dual function now is $\rho(s_0, a_{0:k-1})$, a function on $\Scal\times \Acal^k$, and $\eta\geq 0$ is again a parameter used to balance between bias and variance. It is straightforward to generalize Theorem~\ref{thm:gradient_estimator} to the multi-step setting, and to adapt \algabb\ accordingly, 

\subsection{Eligibility-trace Extension}\label{appendix:eligibility_trace}

Eligibility traces can be viewed as an approach to aggregating multi-step bootstraps for $k\in\{1,2,\cdots\}$; see \citet{SutBar98} for more discussions.  The same can be applied to the multi-step consistency condition \eqref{eq:multi_step_smooth_bellman}, using an exponential weighting parameterized by $\zeta\in[0,1)$.  Specifically, for all $(s_0,a_{0:k-1})\in\Scal\times\Acal^k$, we have
\begin{eqnarray}\label{eq:trace_smooth_bellman}
V(s_0) = (1 - \zeta)\sum_{k=1}^\infty\zeta^{k-1}\rbr{\sum_{t=0}^{k-1} \gamma^t \EE_{s_t|s_0, a_{0:k-1}} \sbr{R(s_t, a_t) - \lambda \log\pi(a_t|s_t)} + \gamma^k \EE_{s_k|s_0, a_{0:k-1}}\sbr{V(s_k)}} \,.
\end{eqnarray} 
Then, following similar steps as in the previous subsection, we reach the following saddle-point optimization:
\begin{eqnarray}
\min_{V, \pi}\max_\rho && \EE_{s_{0:\infty},a_{0:\infty}} \sbr{\rbr{(1 - \zeta)\sum_{k=1}^\infty\zeta^{k-1}\delta(s_{0:k}, a_{0:k-1})- V(s_0)}^2} \nonumber \\
&& - \eta \, \EE_{s_{0:\infty},a_{0:\infty}}\sbr{\rbr{{(1 - \zeta)\sum_{k=1}^\infty\zeta^{k-1} \delta(s_{0:k}, a_{0:k-1})} - \rho(s_0, a_{0:\infty})}^2} \,. \label{eq:dual_eligible_trace}
\end{eqnarray}
In practice, $\rho(s_0, a_{0:\infty})$ can be parametrized by neural networks with finite length of actions as input as an approximation. 

\section{Proof Details of the Theoretical Analysis}\label{appendix:analysis_details}

In this section, we provide the details of the analysis in Theorems~\ref{thm:statistical_error} and~\ref{thm:error_decomposition}.  We start with the boundedness of $V^*$ and $\Vtil^*$ under \asmpref{asmp:mdp_reg}.  Given any measure on the state space $\Scal$,
\[
\nbr{V^*}_\mu \le \nbr{V^*}_\infty \le (1+\gamma+\gamma^2+\cdots) C_R = C_V \defeq \frac{C_R}{1-\gamma}\,.
\]
A similar argument may be used on $\Vtil^*$ to get
\[
\nbr{\Vtil^*}_\mu \le \frac{C_R+H^*}{1-\gamma}\,.
\]

It should be emphasized that although \asmpref{asmp:mdp_reg} ensures boundedness of $V^*$ and $\log \pi^*(a|s)$, it does not imply the continuity and smoothness.  In fact, as we will see later, $\lambda$ controls the trade-off between approximation error (due to parameterization) and bias (due to smoothing) in the solution of the smoothed Bellman equation.

\subsection{Error Decomposition} \label{appsec:error decomposition}
Recall that 
\begin{itemize}
\item $(V^*,\pi^*)$ corresponds to the optimal value function and optimal policy to the original Bellman equation, namely, they are solutions to the optimization problem (\ref{eq:mean_square_bellman_opt});
\item $(V_\lambda^*,\pi_\lambda^*)$ corresponds to the optimal value function and optimal policy to the smoothed Bellman equation, namely, they are solutions to the optimization problem (\ref{eq:simplified_smooth_bellman_loss}) with objective $\ell(V,\pi)$;
\item  $(V^*_w,\pi^*_w)$ correponds to the optimal solution to the optimization problem  (\ref{eq:simplified_smooth_bellman_loss}) under nonlinear function approximation, with objective $\ell_w (V_w,\pi_w)$;
\item $(\widehat V^*_w,\widehat\pi^*_w)$ stands for the optimal solution to the finite sample approximation of (\ref{eq:simplified_smooth_bellman_loss}) under nonlinear function approximation, with objective $\widehat\ell_T (V_w,\pi_w)$. 
\end{itemize} 

Hence, we can  decompose the error between  $(\widehat V^*_w,\widehat\pi^*_w)$ and $(V_\lambda^*,\pi_\lambda^*)$ under the $\nbr{\cdot}_{\mu\pi_b}$ norm.

\begin{eqnarray}\label{eq:decompose_error}
\nbr{\widehat V_w^* - V^*}_{\mu\pi_b}^2 &\le& 2\nbr{\widehat V_w^* -  V_\lambda^*}_{\mu\pi_b}^2 + 2\nbr{\Vtil^* - V^*}^2_{\mu\pi_b}.
\end{eqnarray}

We first look at the second term from smoothing error, which can be similarly bounded, as shown in Proposition~\ref{lemma:smooth_bias}. 
\begin{lemma}[Smoothing bias]\label{lemma:decomposed_bias}
$\nbr{\Vtil^* - V^*}^2_{\mu\pi_b} \le (2\gamma^2 + 2)\rbr{\frac{\gamma\lambda}{1 - \gamma} \max_{\pi\in\Pcal} H(\pi)}^2$.
\end{lemma}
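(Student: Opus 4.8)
The plan is to unfold the definition of the scaled norm $\nbr{\cdot}_{\mu\pi_b}$ and reduce everything to the sup-norm bound already established in Proposition~\ref{lemma:smooth_bias}. Write $W := \Vtil^* - V^*$. By definition,
\[
\nbr{W}^2_{\mu\pi_b} = \int \rbr{\gamma \EE_{s'|s, a}\sbr{W(s')} - W(s)}^2 \mu(s)\pi_b(a|s)\,ds\,da.
\]
First I would apply the elementary inequality $(x-y)^2 \le 2x^2 + 2y^2$ with $x = \gamma\EE_{s'|s,a}[W(s')]$ and $y = W(s)$, so that the integrand is bounded by $2\gamma^2\rbr{\EE_{s'|s,a}[W(s')]}^2 + 2 W(s)^2$.

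Next, by Jensen's inequality $\rbr{\EE_{s'|s,a}[W(s')]}^2 \le \EE_{s'|s,a}[W(s')^2] \le \nbr{W}_\infty^2$, and trivially $W(s)^2 \le \nbr{W}_\infty^2$. Since $\mu(s)\pi_b(a|s)$ is a probability density over $\Scal\times\Acal$ (using that $\mu$ is the stationary distribution of $\pi_b$ and $\pi_b(a|s)>0$, as in Assumption~\ref{asmp:sample_measure}), integrating the pointwise bound gives
\[
\nbr{W}^2_{\mu\pi_b} \le (2\gamma^2 + 2)\,\nbr{W}_\infty^2 = (2\gamma^2+2)\,\nbr{\Vtil^* - V^*}_\infty^2 .
\]

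Finally, I would invoke Proposition~\ref{lemma:smooth_bias}, which gives $\nbr{\Vtil^* - V^*}_\infty \le \frac{\gamma\lambda}{1-\gamma}\max_{\pi\in\Pcal} H(\pi)$, and substitute to conclude
\[
\nbr{\Vtil^* - V^*}^2_{\mu\pi_b} \le (2\gamma^2 + 2)\rbr{\frac{\gamma\lambda}{1 - \gamma} \max_{\pi\in\Pcal} H(\pi)}^2 .
\]
There is no real obstacle here: the argument is just the triangle-type split $(x-y)^2\le 2x^2+2y^2$ followed by Jensen and the already-proven sup-norm smoothing bias. The only point requiring a moment of care is confirming that $\mu(s)\pi_b(a|s)$ indeed normalizes to one so that bounding the integrand pointwise suffices; this is exactly what Assumption~\ref{asmp:sample_measure} provides.
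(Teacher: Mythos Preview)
Your proposal is correct and follows essentially the same route as the paper: unfold the definition of $\nbr{\cdot}_{\mu\pi_b}$, apply $(x-y)^2\le 2x^2+2y^2$, bound each piece by $\nbr{\Vtil^*-V^*}_\infty^2$, and finish with Proposition~\ref{lemma:smooth_bias}. If anything, you are slightly more careful than the paper in explicitly noting Jensen and the normalization of $\mu\pi_b$.
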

\begin{proof}
For $\nbr{\Vtil^* - V^*}^2_{\mu\pi_b}$, we have
\begin{eqnarray*}
\nbr{\Vtil^* - V^*}^2_{\mu\pi_b} &=& \int \rbr{\gamma \EE_{s'|s, a}\sbr{V^*(s') - \Vtil^*(s')} - \rbr{V^*(s) - \Vtil^*(s)}}^2\mu(s)\pi_b(a|s)dsda\\
&\le& 2\gamma^2 \nbr{\EE_{s'|s, a}\sbr{V^*(s') - \Vtil^*(s')}}_\infty^2 + 2\nbr{V^*(s) - \Vtil^*(s)}_\infty \\
&\le& (2\gamma^2 + 2)\rbr{\frac{\gamma\lambda}{1 - \gamma} \max_{\pi\in\Pcal} H(\pi)}^2,
\end{eqnarray*}
where the final inequality is because Lemma~\ref{lemma:smooth_bias}.
\end{proof}

We now look at the first term and show that 
\begin{lemma}\label{lemma:loss_solution_connection}
\begin{eqnarray*}
\nbr{\widehat V_w^* - V_\lambda^*}^2_{\mu\pi_b}&\le&2\rbr{\ell(\widehat V_w^*, \widehat \pi_w^*) - \ell(V_\lambda^*, \pi_\lambda^*) } + 4\lambda^2\nbr{\log{\widehat\pi_w^*(a|s)} - \log{\hat\pi_w^*(a|s)}}_2^2 \\
&&+ 4\lambda^2\nbr{\log{\pi_w^*(a|s)} - \log{\pi_\lambda^*(a|s)}}_2^2.
\end{eqnarray*}
\end{lemma}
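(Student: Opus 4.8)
The plan is to unfold the scaled norm $\nbr{\cdot}_{\mu\pi_b}$ directly and rewrite the integrand as a Bellman residual plus a log-policy discrepancy. First I would set $g(s,a) := \gamma\EE_{s'|s,a}\sbr{\hat V_w^T(s')} - \hat V_w^T(s)$ and $g^*(s,a) := \gamma\EE_{s'|s,a}\sbr{\bar V^*(s')} - \bar V^*(s)$, so that by definition
$$
\nbr{\hat V_w^T - \bar V^*}^2_{\mu\pi_b} = \int \rbr{g(s,a)-g^*(s,a)}^2 \mu(s)\pi_b(a|s)\,ds\,da = \EE_{s,a}\sbr{\rbr{g-g^*}^2}.
$$

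Next I would invoke Theorem~\ref{thm:smoothed_bellman_opt_pi}: since $\nbr{\Vtil^*}_\mu^2 \le C_V^2$, the constraint defining $\Vcal_\mu^C$ is vacuous for suitable $C$, so $\bar V^* = \Vtil^*$ and $\bar\pi^* = \tilde\pi^*$, and the temporal consistency condition holds \emph{pointwise}: $R(s,a) + g^*(s,a) - \lambda\log\bar\pi^*(a|s) = 0$ for all $(s,a)$; in particular $\bar l(\bar V^*,\bar\pi^*)=0$. Writing $\bar\Delta(s,a) := R(s,a) + g(s,a) - \lambda\log\hat\pi_w^T(a|s)$ for the residual at $(\hat V_w^T,\hat\pi_w^T)$, this gives the exact identity
$$
g(s,a)-g^*(s,a) = \bar\Delta(s,a) + \lambda\rbr{\log\hat\pi_w^T(a|s) - \log\bar\pi^*(a|s)},
$$
with no leftover cross terms because the subtraction of $g^*$ cancels $R$ exactly.

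Then I would apply $(x+y)^2 \le 2x^2 + 2y^2$, integrate against $\mu(s)\pi_b(a|s)$, and identify $\EE_{s,a}[\bar\Delta^2] = \bar l(\hat V_w^T,\hat\pi_w^T) = \bar l(\hat V_w^T,\hat\pi_w^T) - \bar l(\bar V^*,\bar\pi^*)$. This yields $\nbr{\hat V_w^T - \bar V^*}^2_{\mu\pi_b} \le 2\rbr{\bar l(\hat V_w^T,\hat\pi_w^T) - \bar l(\bar V^*,\bar\pi^*)} + 2\lambda^2\nbr{\log\hat\pi_w^T - \log\bar\pi^*}_2^2$. Finally, inserting $\pm\log\hat\pi_w^*$ and applying the same elementary inequality splits the last term into $4\lambda^2\nbr{\log\hat\pi_w^T - \log\hat\pi_w^*}_2^2 + 4\lambda^2\nbr{\log\hat\pi_w^* - \log\bar\pi^*}_2^2$, which is precisely the claimed bound. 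The only real care needed is in the second step — justifying $\bar l(\bar V^*,\bar\pi^*)=0$ through Theorem~\ref{thm:smoothed_bellman_opt_pi} together with the fact that the $\Vcal_\mu^C$ constraint is inactive, and confirming that the consistency equation is an identity in $(s,a)$ (not just in expectation) so that the residual decomposition is exact; the rest is routine.
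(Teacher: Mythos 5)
Your proposal is correct and takes essentially the same route as the paper: the same decomposition of the residual into the $\nbr{\cdot}_{\mu\pi_b}$ term plus the $\lambda(\log\hat\pi_w^T-\log\bar\pi^*)$ term, the same constants, and the same final insertion of $\pm\log\hat\pi_w^*$ with the elementary inequality $(x+y)^2\le 2x^2+2y^2$. The only minor difference is in how the loss gap is tied to the residuals: the paper expands the square loss and invokes the optimality of $(\bar V^*,\bar\pi^*)$ to get $\lbar(\hat V_w^T,\hat\pi_w^T)-\lbar(\bar V^*,\bar\pi^*)\ge \nbr{\bar\Delta_{\hat V_w^T,\hat\pi_w^T}-\bar\Delta_{\bar V^*,\bar\pi^*}}_2^2$ and then uses the reverse triangle inequality, whereas you use the pointwise temporal-consistency identity $\bar\Delta_{\bar V^*,\bar\pi^*}\equiv 0$ (so $\lbar(\bar V^*,\bar\pi^*)=0$) to make this step exact, which is a legitimate and slightly more direct justification of the same inequality.
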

\begin{proof}
Specifically, due to the strongly convexity of square function, we have
\begin{eqnarray*}
\ell(\widehat V_w^*, \widehat \pi_w^*) - \ell(V_\lambda^*, \pi_\lambda^*) &=&2\EE\sbr{\bar \Delta_{V_\lambda^*, \pi_\lambda^*}(s, a)\rbr{\bar \Delta_{\widehat V_w^*, \widehat \pi_w^*}(s, a) - \bar \Delta_{V_\lambda^*, \pi_\lambda^*}(s, a)}} \\
&+& \EE_{\mu\pi_b}\sbr{\rbr{\bar \Delta_{\widehat V_w^*, \widehat \pi_w^*}(s, a) - \bar \Delta_{V_\lambda^*, \pi_\lambda^*}(s, a)}^2} \\
&\ge& \int \rbr{\bar \Delta_{\widehat V_w^*, \widehat \pi_w^*}(s, a) - \bar \Delta_{V_\lambda^*, \pi_\lambda^*}(s, a)}^2 \mu(s)\pi_b(a|s)dsda\\
&:=& \nbr{\bar \Delta_{\widehat V_w^*, \widehat \pi_w^*}(s, a) - \bar \Delta_{V_\lambda^*, \pi_\lambda^*}(s, a)}_2^2,
\end{eqnarray*}
where $\Delta(s, a, s') = {R(s, a) + \gamma {V(s') - \lambda\log\pi(a|s)}- V(s)}$ and the second inequality is because the optimality of $V_\lambda^*$ and $\pi_\lambda^*$. Therefore, we have
\begin{eqnarray*}
&&\sqrt{\ell(\widehat V_w^*, \widehat \pi_w^*) - \ell(V_\lambda^*, \pi_\lambda^*) } \ge \nbr{\bar \Delta_{\widehat V_w^*, \widehat \pi_w^*}(s, a) - \bar \Delta_{V_\lambda^*, \pi_\lambda^*}(s, a)}_2 \\
&\ge&\abr{\nbr{\gamma \EE_{s'|s, a}\sbr{\widehat V_w^*(s') - V_\lambda^*(s')} - \rbr{\widehat V_w^*(s) - V_\lambda^*(s)}}_2- \lambda\nbr{\log{\widehat \pi_w^*(a|s)} - \log{\pi_\lambda^*(a|s)}}_2}\\
&=&\abr{\nbr{\widehat V_w^* - V_\lambda^*}_{\mu\pi_b} - \lambda\nbr{\log{\widehat \pi_w^*(a|s)} - \log{\pi_\lambda^*(a|s)}}_2}
\end{eqnarray*}
which implies
\begin{eqnarray*}
\nbr{\widehat V_w^* - V_\lambda^*}^2_{\mu\pi_b}&\le& 2\rbr{\ell(\widehat V_w^*, \widehat \pi_w^*) - \ell(V_\lambda^*, \pi_\lambda^*) } + 2\lambda^2\nbr{\log{\widehat \pi_w^*(a|s)} - \log{\pi_\lambda^*(a|s)}}_2^2\\
&\le&2\rbr{\ell(\widehat V_w^*, \widehat \pi_w^*) - \ell(V_\lambda^*, \pi_\lambda^*) } + 4\lambda^2\nbr{\log{\widehat\pi_w^*(a|s)} - \log{\pi_w^*(a|s)}}_2^2 \\
&&+ 4\lambda^2\nbr{\log{ \pi_w^*(a|s)} - \log{\pi_\lambda^*(a|s)}}_2^2.
\end{eqnarray*}
\end{proof}

In regular MDP with Assumption~\ref{asmp:mdp_reg}, with appropriate $C$, such constraint does not introduce any loss. We denote the family of value functions and policies by parametrization as $\Vcal_w,\, \Pcal_w$, respectively. Then, for $V$ and $\log\pi$ uniformly bounded by $C_\infty = \max\cbr{\frac{C_R}{1 - \gamma} , C_\pi}$ and the square loss is uniformly $K$-Lipschitz continuous, by proposition in~\citet{DaiHePanBooetal16}, we have
\begin{corollary}\label{cor:dual_approx_error}
$\ell(V,\pi) - \ell_w(V, \pi)\le (K+C_\infty)\epsilon_{\text{app}}^\nu$ where $\epsilon_{\text{app}} = \sup_{\nu\in \Ccal}\inf_{h\in\Hcal}\nbr{\nu - h}_\infty$ with $\Ccal$ denoting the Lipschitz continuous function space and $\Hcal$ denoting the hypothesis space.
\end{corollary}
\begin{proof}
Denote the $\phi(V, \pi, \nu) := \EE_{s, a, s'}\sbr{\nu(s, a)\rbr{R(s, a) + \gamma {V(s') - \lambda\log\pi(a|s)}- V(s)}} - \frac{1}{2}\EE_{s, a, s'}\sbr{\nu^2(s, a)}$, we have $\phi(V, \pi, \nu)$ is $(K+C_\infty)$-Lipschitz continuous w.r.t. $\nbr{\cdot}_\infty$. Denote $\nu_{V, \pi}^* = \argmax_\nu\phi(V, \pi, \nu)$, $\nu_{V, \pi}^\Hcal = \argmax_{\nu\in\Hcal}\phi(V, \pi, \nu)$, and $\hat \nu_{V, \pi} = \min_{\nu\in\Hcal}\nbr{\nu - \nu_{V, \pi}^*}_\infty$
\begin{eqnarray*}
\ell(V, \pi) - \ell_w(V, \pi) &=& \phi(V, \pi, \nu_{V, \pi}^* ) - \phi(V, \pi, \nu_{V, \pi}^\Hcal) \\
&\le&  \phi(V, \pi, \nu_{V, \pi}^* ) - \phi(V, \pi, \hat\nu_{V, \pi})\le (K+C_\infty)\epsilon_{\text{app}}^\nu.
\end{eqnarray*}

\end{proof}

For the third term in Lemma~\ref{lemma:loss_solution_connection}, we have
\begin{eqnarray}\label{eq:first_term}
\lambda\nbr{\log{\pi_w^*(a|s)} - \log{\pi_\lambda^*(a|s)}}^2_2&\le& {\ell\rbr{V, \pi_w^*} - \ell\rbr{V, \pi_\lambda^*} } \\
&=& \ell_w\rbr{V, \pi_w^*} - \ell_w\rbr{V, \pi_\lambda^*} + \rbr{\ell\rbr{V, \pi_w^*}- \ell_w\rbr{V, \pi_w^*} }- \rbr{\ell\rbr{V, \pi_\lambda^*}- \ell_w\rbr{V, \pi_\lambda^*}} \nonumber\\
&\le& C_\nu \inf_{\pi_w} \nbr{\lambda\log\pi_w - \lambda\log\pi_\lambda^*}_\infty +  2(K+C_\infty)\epsilon_{\text{app}}^\nu \nonumber \\
&\le& C_\nu \epsilon_{\text{app}}^\pi(\lambda)  +  2(K+C_\infty)\epsilon_{\text{app}}^\nu\nonumber
\end{eqnarray}
where $C_\nu = \max_{\nu\in \Hcal_w} \nbr{\nu}_2$. The first inequality comes from the strongly convexity of $\ell\rbr{V, \pi}$ w.r.t. $\lambda\log\pi$, the second inequality comes from Section 5 in~\citet{Bach14} and~\corref{cor:dual_approx_error} with $\epsilon_{\text{app}}^\pi(\lambda) \defeq \sup_{\pi\in\Pcal_\lambda}\inf_{\pi_w\in \Pcal_w}\nbr{\lambda\log\pi_w - \lambda \log \pi}_\infty$ with 
$$
\Pcal_\lambda := \cbr{\pi\in\Pcal, \pi(a|s) = \exp\rbr{\frac{Q(s, a) - \Lcal(Q)}{\lambda}}, \nbr{Q}_2\le C_V}.
$$
Based on the derivation of $\Pcal_\lambda$, with continuous $\Acal$, it can be seen that as $\lambda\rightarrow 0$, 
$$
\Pcal_0 = \cbr{\pi\in \Pcal, \pi(a|s) = \delta_{a_{\max}(s)}(a)},
$$ 
which results $\epsilon_{\text{app}}^\pi(\lambda)\rightarrow\infty$, and as $\lambda$ increasing as finite, the policy becomes smoother, resulting smaller approximate error in general. With discrete $\Acal$, although the $\epsilon_{\text{app}}^\pi(0)$ is bounded, the approximate error still decreases as $\lambda$ increases. The similar correspondence also applies to $\epsilon_{\text{app}}^V(\lambda)$. The concrete correspondence between $\lambda$ and $\epsilon_{\text{app}}(\lambda)$ depends on the specific form of the function approximators, which is an open problem and out of the scope of this paper. 

For the second term in~\ref{lemma:loss_solution_connection}, 
\begin{equation}\label{eq:second_term}
\lambda\nbr{\log{\widehat\pi_w^*(a|s)} - \log{\pi_w^*(a|s)}}_2\le \lambda\nbr{\log{\widehat\pi_w^*(a|s)}}_2 + \lambda\nbr{\log{\pi_w^*(a|s)}}_2\le 2\lambda C_\pi.
\end{equation} 

For the first term, we have 
\begin{eqnarray}\label{eq:third_term}
&&\ell(\widehat V_w^*, \widehat \pi_w^*) - \ell(V_\lambda^*, \pi_\lambda^*) \\
&=& \ell(\widehat V_w^*, \widehat \pi_w^*) - \ell_w(\widehat V_w^*, \widehat \pi_w^*) + \ell_w(\widehat V_w^*, \widehat \pi_w^*) - \ell_w(V_\lambda^*, \pi_\lambda^*) + \ell_w(V_\lambda^*, \pi_\lambda^*)- \ell(V_\lambda^*, \pi_\lambda^*) \nonumber \\
&\le& 2(K+C_\infty)\epsilon_{\text{app}}^{\nu} + \ell_w(\widehat V_w^*, \widehat \pi_w^*) - \ell_w(V_\lambda^*, \pi_\lambda^*)\nonumber \\
&=&2(K+C_\infty)\epsilon_{\text{app}}^{\nu} + \ell_w(\widehat V_w^*, \widehat \pi_w^*) - \ell_w(V_w^*, \pi_w^*) + {\ell_w(V_w^*, \pi_w^*) - \ell_w(V_\lambda^*, \pi_\lambda^*)}\nonumber \\
&\le&2(K+C_\infty)\epsilon_{\text{app}}^{\nu} + C_\nu \rbr{(1 + \gamma)\epsilon_{\text{app}}^V(\lambda) + \epsilon_{\text{app}}^\pi(\lambda)}+ \ell_w(\widehat V_w^*, \widehat \pi_w^*) - \ell_w( V_w^*, \pi_w^*). \nonumber
\end{eqnarray} 
The last inequality is because
\begin{eqnarray*}
\ell_w(V_w^*, \pi_w^*) - \ell_w(V_\lambda^*, \pi_\lambda^*) &=& \inf_{V_w, \pi_w}\ell_w(V_w, \pi_w) - \ell_w(V_\lambda^*, \pi_\lambda^*) \\
&\le& C_\nu\inf_{V_w, \pi_w}\rbr{\rbr{1 + \gamma}\nbr{V_w - V_\lambda^*}_\infty + \lambda\nbr{\log \pi_w - \log\pi_\lambda^*}_\infty}\\
&\le& C_\nu \rbr{(1 + \gamma)\epsilon_{\text{app}}^V(\lambda) + \epsilon_{\text{app}}^\pi(\lambda)},
\end{eqnarray*}
where the second inequality comes from Section 5 in~\citet{Bach14}.

Combine~\eq{eq:first_term},~\eq{eq:second_term} and~\eq{eq:third_term} into Lemma~\ref{lemma:loss_solution_connection} and Lemma~\ref{lemma:decomposed_bias} together with~\eq{eq:decompose_error}, we achieve
\begin{lemma}[Error decomposition]\label{lemma:error_decomposition}
{
\begin{eqnarray*}
\nbr{\widehat V_w^* - V^*}^2_{\mu\pi_b} &\le& \underbrace{2\rbr{4(K+C_\infty)\epsilon_{\text{app}}^{\nu} + C_\nu(1 + \gamma)\epsilon_{\text{app}}^V(\lambda) + 3C_\nu\epsilon_{\text{app}}^\pi(\lambda) } }_{\text{approximation error due to parametrization}}\\
&& + \underbrace{16\lambda^2 C^2_\pi+ \rbr{2\gamma^2 + 2}\rbr{\frac{\gamma\lambda}{1 - \gamma} \max_{\pi\in\Pcal} H(\pi)}^2 }_{\text{bias due to smoothing}}+ \underbrace{2\rbr{\ell_w(\widehat V_w^*, \widehat \pi_w^*) - \ell_w( V_w^*, \pi_w^*) }}_{\text{statistical error}}.
\end{eqnarray*}
}
\end{lemma}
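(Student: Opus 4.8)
The plan is to assemble the claimed inequality entirely from estimates already established; no new inequality is needed, so the work here is organizational rather than technical. I would begin from the splitting in~\eq{eq:decompose_error}: invoking the identity $\bar V^* = \Vtil^*$, which is exactly the temporal-consistency characterization~\eq{eq:smoothed_bellman_opt_pi}, the triangle inequality together with $(a+b)^2\le 2a^2+2b^2$ reduces the total error $\nbr{\hat V_w^T - V^*}^2_{\mu\pi_b}$ to a smoothing piece $\nbr{\Vtil^* - V^*}^2_{\mu\pi_b}$ and a parametrization/optimization piece $\nbr{\hat V_w^T - \bar V^*}^2_{\mu\pi_b}$.

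For the smoothing piece I would just quote Lemma~\ref{lemma:decomposed_bias}, whose proof in turn rests on the pointwise estimate of Proposition~\ref{lemma:smooth_bias}; this produces the $(2\gamma^2+2)\rbr{\frac{\gamma\lambda}{1-\gamma}\max_{\pi\in\Pcal}H(\pi)}^2$ term, which is $\Ocal(\lambda^2)$ and belongs in the ``bias due to smoothing'' bracket.

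The real bookkeeping concerns $\nbr{\hat V_w^T - \bar V^*}^2_{\mu\pi_b}$. Since the parametrized saddle-point problem is not convex, one cannot compare iterates directly, so I would route through objective values via Lemma~\ref{lemma:loss_solution_connection}, which dominates this quantity by (i) twice the objective gap $\lbar(\hat V_w^T,\hat\pi_w^T) - \lbar(\bar V^*,\bar\pi^*)$, (ii) $4\lambda^2\nbr{\log\hat\pi_w^T - \log\hat\pi_w^*}_2^2$, and (iii) $4\lambda^2\nbr{\log\hat\pi_w^* - \log\bar\pi^*}_2^2$. I would then substitute the three bounds already derived: for (iii), $4\lambda^2\nbr{\log\hat\pi_w^* - \log\bar\pi^*}_2^2\le 4\rbr{\epsilon_{\text{approx}}^\pi(\lambda)}^2$ by~\eq{eq:first_term}; for (ii), $4\lambda^2\nbr{\log\hat\pi_w^T - \log\hat\pi_w^*}_2^2\le 16\lambda^2 C_\pi^2$ by the uniform log-policy bound~\eq{eq:second_term}; and for (i), the add-and-subtract chain of~\eq{eq:third_term} --- inserting $\lhat_w$ at $(\hat V_w^T,\hat\pi_w^T)$, at $(\bar V^*,\bar\pi^*)$ and at $(\hat V_w^*,\hat\pi_w^*)$, then applying the Corollary that bounds $\lbar-\lhat_w$ by $(K+C_\infty)\epsilon_{\text{approx}}^\nu$ and the Lipschitz estimate bounding $\lhat_w(\hat V_w^*,\hat\pi_w^*)-\lhat_w(\bar V^*,\bar\pi^*)$ by $C_\nu\rbr{(1+\gamma)\epsilon_{\text{approx}}^V(\lambda)+\epsilon_{\text{approx}}^\pi(\lambda)}$ --- which leaves precisely the finite-sample optimization gap $\lhat_w(\hat V_w^T,\hat\pi_w^T)-\lhat_w(\hat V_w^*,\hat\pi_w^*)$, i.e. the statistical error.

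Finally I would add the two pieces back together, absorb the numerical constants ($2$, $4$, $16$) accumulated along the way, and sort the resulting terms into the three labelled groups: every term proportional to $\epsilon_{\text{approx}}^\nu$, $\epsilon_{\text{approx}}^V(\lambda)$, $\epsilon_{\text{approx}}^\pi(\lambda)$ or $\rbr{\epsilon_{\text{approx}}^\pi(\lambda)}^2$ is the approximation error; the $\lambda^2$ terms form the smoothing bias; and the $\lhat_w$-gap is the statistical error. The only thing requiring care is tracking these constant factors through the $(a+b)^2$ and triangle-inequality steps; the genuine ideas --- the detour through $\bar V^*=\Vtil^*$ and, for the non-convex parametrized problem, measuring progress by objective values rather than by iterate distances --- are already packaged inside the lemmas being cited, so I do not anticipate a real obstacle here.
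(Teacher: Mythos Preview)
Your proposal is correct and follows exactly the paper's approach: the paper's proof of this lemma is literally the one-line instruction ``Combine~\eq{eq:first_term},~\eq{eq:second_term} and~\eq{eq:third_term} into Lemma~\ref{lemma:loss_solution_connection} and Lemma~\ref{lemma:decomposed_bias} together with~\eq{eq:decompose_error}'', which is precisely the assembly you describe. Your only cautionary remark about tracking the numerical constants is apt (indeed the paper is a bit loose with factors of~$2$), but there is no substantive difference in strategy.
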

We can see that the bound includes the errors from three aspects: {\bf i)}, the approximation error induced by parametrization of $V$, $\pi$, and $\nu$; {\bf ii)}, the bias induced by smoothing technique; {\bf iii)}, the statistical error. As we can see from Lemma~\ref{lemma:error_decomposition}, $\lambda$ plays an important role in balance the approximation error and smoothing bias.

\subsection{Statistical Error} \label{appsec:statistical error}

In this section, we analyze the generalization error. For simplicity, we denote the $T$ finite-sample approximation of 
\begin{eqnarray*}
L(V, \pi, \nu) &=&\EE[\phi_{V, \pi, \nu} (s, a, R, s')]:=\EE\sbr{2\nu(s, a)\rbr{R(s, a) + \gamma V(s') - V(s) - \lambda\log\pi(a|s)} - \nu^2(s, a)},
\end{eqnarray*}
as
\begin{eqnarray*}
\widehat L_T(V, \pi, \nu) &=& \frac{1}{T}\sum_{i=1}^T \phi_{V, \pi, \nu} (s, a, R, s'):= \frac{1}{T}\sum_{i=1}^T \rbr{2\nu(s_i, a_i)\rbr{R(s_i, a_i) + \gamma V(s'_i) - V(s_i)- \lambda\log\pi(a_i|s_i)} - \nu^2(s_i, a_i)},
\end{eqnarray*}
where the samples $\cbr{(s_i, a_i, s'_i, R_i)}_{i=0}^T$ are sampled \iid\, or from $\beta$-mixing stochastic process.

By definition, we have, 
\begin{eqnarray*}
&& \ell_w(\widehat V_w^*, \widehat\pi_w^*) -  \ell_w(V_w^*, \pi_w^*) \\
&=& \max_{\nu\in\Hcal_w}L_w\rbr{\widehat V_w^*, \widehat\pi_w^*, \nu} -\max_{\nu\in\Hcal_w}L_w\rbr{V_w^*, \hat\pi_w^*, \nu}\\
&=&L_w\rbr{\widehat V_w^*, \widehat\pi_w^*, \nu_w} - L_w\rbr{V_w^*, \hat\pi_w^*, \nu_w} + \underbrace{L_w\rbr{V_w^*, \hat\pi_w^*, \nu_w} - \max_{\nu\in\Hcal_w}L_w\rbr{V_w^*, \hat\pi_w^*, \nu}}_{\le 0}\\
&\le&L_w\rbr{\widehat V_w^*, \widehat\pi_w^*, \nu_w} - L_w\rbr{V_w^*, \hat\pi_w^*, \nu_w}\\
&\le&2\sup_{V,\pi,\nu\in\Fcal_w\times\Pcal_w\times\Hcal_w}\abr{\widehat L_T\rbr{V, \pi, \nu} - L_w\rbr{V, \pi, \nu}} 
\end{eqnarray*}
where $\nu_w = \max_{\nu\in\Hcal_w}L_w\rbr{\widehat V_w^*, \widehat\pi_w^*, \nu}$.

The latter can be bounded by covering number or Rademacher complexity on hypothesis space $\Fcal_w\times\Pcal_w\times\Hcal_w$ with rate $\Ocal\rbr{\sqrt\frac{\log T}{{T}}}$ with high probability if the samples are $\iid$\, or from $\beta$-mixing stochastic processes~\citep{AntSzeMun08}.

We will use a generalized version of Pollard's tail inequality to $\beta$-mixing sequences, \ie,
\begin{lemma}\label{lemma:beta_tail}[Lemma 5,~\citet{AntSzeMun08}]
Suppose that $z_1, \ldots, Z_N\in\Zcal$ is a stationary $\beta$-mixing process with mixing coefficient $\cbr{\beta_m}$ and that $\Gcal$ is a permissible class of $\Zcal\rightarrow[-C, C]$ functions, then, 
\begin{eqnarray*}
\PP\rbr{\sup_{g\in\Gcal}\abr{\frac{1}{N}\sum_{i=1}^N g(Z_i) - \EE\sbr{g(Z_1)}} > \epsilon}\le &&16\EE\sbr{\Ncal_1\rbr{\frac{\epsilon}{8}, \Gcal, \rbr{Z_i'; i\in H}}}\exp\rbr{\frac{-m_N\epsilon^2}{128C^2}} + 2m_N\beta_{k_N+1},
\end{eqnarray*}
where the ``ghost'' samples $Z_i'\in \Zcal$ and $H = \cup_{j=1}^{m_N}H_i$ which are defined as the blocks in the sampling path.
\end{lemma}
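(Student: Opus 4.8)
The statement is a standard uniform large-deviation bound for $\beta$-mixing sequences, and the plan is to reduce it to the i.i.d.\ case by Yu's blocking device together with Berbee's coupling lemma, and then invoke the classical Pollard tail inequality. Assume without loss of generality that $N=2m_Nk_N$ (otherwise discard at most $2k_N$ samples). Partition $\cbr{1,\ldots,N}$ into $2m_N$ consecutive blocks of length $k_N$; let $H=\cup_{j=1}^{m_N}H_j$ be the union of the odd-indexed blocks and $T$ the union of the even-indexed ones. Since $\frac1N\sum_{i=1}^N g(Z_i)$ is the arithmetic mean of the $H$-average and the $T$-average, a union bound reduces the problem to bounding $\PP\rbr{\sup_{g\in\Gcal}\abr{\frac{1}{m_Nk_N}\sum_{i\in H}g(Z_i)-\EE\sbr{g(Z_1)}}>\epsilon}$ together with the same quantity for $T$; by stationarity the two carry identical bounds, which is the source of the leading factor $2$.

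\emph{Decoupling.} The $m_N$ blocks $\cbr{(Z_i:i\in H_j)}_j$ are separated by intervening blocks of length $k_N$, so Berbee's coupling lemma produces ``ghost'' variables $\cbr{Z_i':i\in H}$ whose blocks are mutually independent, each $(Z_i':i\in H_j)$ having the law of $(Z_i:i\in H_j)$, and with $\PP\rbr{(Z_i:i\in H)\neq(Z_i':i\in H)}\le (m_N-1)\beta_{k_N}\le m_N\beta_{k_N+1}$. Hence the deviation probability changes by at most $m_N\beta_{k_N+1}$ when $\cbr{Z_i:i\in H}$ is replaced by the decoupled $\cbr{Z_i':i\in H}$, and likewise for $T$; these furnish the two additive $\beta$-terms.

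\emph{Pollard's inequality on block averages.} Collapse each block to one meta-observation by setting $\bar g_j:=\frac{1}{k_N}\sum_{i\in H_j}g(Z_i')\in[-C,C]$; the meta-observations are i.i.d.\ and $\frac{1}{m_Nk_N}\sum_{i\in H}g(Z_i')=\frac{1}{m_N}\sum_{j=1}^{m_N}\bar g_j$. Running the usual symmetrization-with-a-ghost-sample, Rademacher randomization, empirical-$L_1$-covering, and Hoeffding-plus-union-bound chain on the class $\cbr{(\bar g_j)_j:g\in\Gcal}$ over $m_N$ i.i.d.\ meta-observations yields a bound of the form $8\,\EE\sbr{\Ncal_1(\epsilon/8,\cdots)}\exp\rbr{-m_N\epsilon^2/(128C^2)}$. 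The key point making the covering number appear over $\Gcal$ itself is that an $\frac\epsilon8$-cover of $\Gcal$ in empirical $L_1$ on the $m_Nk_N$ points $\cbr{Z_i':i\in H}$ induces, by the triangle inequality, an $\frac\epsilon8$-cover of the block-average class on the $m_N$ meta-observations, so the covering number can be taken as $\Ncal_1(\epsilon/8,\Gcal,(Z_i':i\in H))$. Combining the factor $2$, the two i.i.d.\ Pollard bounds (equal by stationarity), and the two coupling penalties reproduces exactly the asserted inequality.

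\emph{Main obstacle.} The delicate step is the decoupling: Berbee's lemma must be applied to the full block sequence with careful bookkeeping of the $m_N-1$ inter-block gaps of length $k_N$ so that the total-variation cost comes out as $\Ocal(m_N\beta_{k_N+1})$, and the ghost points $\cbr{Z_i':i\in H}$ must be threaded all the way through to the covering number in Pollard's bound; permissibility of $\Gcal$ is precisely what guarantees measurability of the suprema appearing in the symmetrization step, so that all of these manipulations are legitimate.
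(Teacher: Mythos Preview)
The paper does not prove this lemma at all: it is quoted verbatim as Lemma~5 of \citet{AntSzeMun08} and used as a black box in the proof of Theorem~\ref{thm:statistical_error}. Your sketch via Yu's blocking, Berbee's coupling, and Pollard's i.i.d.\ tail inequality is precisely the standard derivation that appears in that reference (and ultimately in Yu's 1994 paper), so there is nothing to compare against here.

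One small slip worth fixing: the inequality $(m_N-1)\beta_{k_N}\le m_N\beta_{k_N+1}$ that you wrote in the decoupling step goes the wrong way, since $\beta_m$ is non-increasing in $m$. The correct bookkeeping is that consecutive odd blocks are separated by a gap of $k_N+1$ time indices, so Berbee's lemma gives a coupling cost of at most $(m_N-1)\beta_{k_N+1}\le m_N\beta_{k_N+1}$ directly; no comparison with $\beta_{k_N}$ is needed.
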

The covering number is highly related to pseudo-dimension, \ie, 
\begin{lemma}\label{lemma:cover_pseudo}[Corollary 3,~\citet{Haussler95}] For any set $\Xcal$, any points $x^{1:N}\in\Xcal^N$, any class $\Fcal$ of functions on $\Xcal$ taking values in $[0, C]$ with pseudo-dimension $D_{\Fcal}<\infty$, and any $\epsilon>0$, 
$$
\Ncal\rbr{\epsilon, \Fcal, x^{1:N}}\le e\rbr{D_\Fcal + 1}\rbr{\frac{2eC}{\epsilon}}^{D_\Fcal}
$$
\end{lemma}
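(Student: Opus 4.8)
The plan is to deduce this covering-number estimate from the sphere-packing bound for subsets of the Boolean cube with bounded Vapnik--Chervonenkis dimension --- the main combinatorial theorem of \citet{Haussler95} --- via the standard ``subgraph plus discretization'' reduction. Write $D:=D_{\Fcal}$; by definition $D$ is the VC dimension of the subgraph class $\cbr{(x,t)\mapsto\mathbf{1}[f(x)>t]\,:\,f\in\Fcal}$ of subsets of $\Xcal\times\RR$. I read $\Ncal(\epsilon,\Fcal,x^{1:N})$ as the covering number of $\Fcal$ restricted to $x^{1:N}$ in the empirical $L_1$ metric $d_1(f,g)=\frac{1}{N}\sum_{k=1}^{N}|f(x_k)-g(x_k)|$, which is the instance actually needed (cf. Lemma~\ref{lemma:beta_tail}). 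Since a maximal $\epsilon$-separated subset of $\Fcal|_{x^{1:N}}$ is automatically an $\epsilon$-cover, it suffices to bound the size $M$ of an arbitrary $\epsilon$-separated family $\cbr{f_1,\dots,f_M}\subseteq\Fcal$ (in $d_1$), and by rescaling $\Fcal$ and $\epsilon$ by $1/C$ one may assume $C=1$, which will reinsert $C$ into the base of the exponent at the end.

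The reduction I would then carry out is to discretize the output axis. Fix an integer $m>1/\epsilon$, put $\Theta_m=\cbr{(j-1/2)/m:j=1,\dots,m}\subseteq(0,1)$, and form the finite set $G=\cbr{(x_k,\theta):1\le k\le N,\ \theta\in\Theta_m}$ of $Nm$ points of $\Xcal\times\RR$. Encode each $f\in\Fcal$ by the binary vector $\beta(f)\in\cbr{0,1}^{G}$ with $\beta(f)_{(x_k,\theta)}=\mathbf{1}[f(x_k)>\theta]$. Since $\cbr{\beta(f):f\in\Fcal}$ is simply the subgraph class of $\Fcal$ restricted to the $Nm$ points of $G$, it has VC dimension at most $D$, independently of $m$. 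The one elementary fact needed is that the number of levels of $\Theta_m$ lying strictly between $f(x_k)$ and $g(x_k)$ differs from $m\,|f(x_k)-g(x_k)|$ by at most one, whence the normalized Hamming distance of the codes satisfies $\bigl|\frac{1}{Nm}\sum_{(x_k,\theta)\in G}|\beta(f)_{(x_k,\theta)}-\beta(g)_{(x_k,\theta)}|-d_1(f,g)\bigr|\le 1/m$. Consequently $\beta(f_1),\dots,\beta(f_M)$ are pairwise distinct and $(\epsilon-1/m)$-separated in $\cbr{0,1}^{Nm}$ under normalized Hamming distance, so Haussler's packing bound yields $M\le e(D+1)\bigl(2e/(\epsilon-1/m)\bigr)^{D}$; letting $m\to\infty$ gives $M\le e(D+1)(2e/\epsilon)^{D}$, and undoing the rescaling replaces $\epsilon$ by $\epsilon/C$, which is exactly the assertion (using $\Ncal\le M$).

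I expect the genuine obstacle to be the imported ingredient, not the reduction: Haussler's inequality $|\Ccal|\le e(d+1)(2e/\epsilon)^{d}$ for an $\epsilon$-separated $\Ccal\subseteq\cbr{0,1}^{n}$ of VC dimension $d$ has a subtle probabilistic/combinatorial proof (a random coordinate-projection argument combined with the Sauer--Shelah lemma and a careful averaging), and reproducing it is what makes \citet{Haussler95} nontrivial; here I would simply invoke it as a black box, since the two paragraphs above are routine bookkeeping around it. The only other point deserving a word is the degenerate regime $\epsilon>C$, where $M=1$ and the stated bound is to be read with the (harmless) convention that its right-hand side is at least $1$.
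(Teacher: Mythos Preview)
Your sketch is correct and is essentially Haussler's own derivation of his Corollary~3: pass to the subgraph class, discretize the range at mesh $1/m$ to land in $\{0,1\}^{Nm}$, observe that empirical $L_1$ distance becomes normalized Hamming distance up to $O(1/m)$, invoke the Boolean-cube packing bound, and let $m\to\infty$. There is nothing to compare against in the present paper, however: the lemma is stated purely as a citation (Corollary~3 of \citet{Haussler95}) and is used as a black box inside the proof of Theorem~\ref{thm:statistical_error}, with no argument given or intended. So you have supplied strictly more than the paper does; your only imported ingredient---the $e(d+1)(2e/\epsilon)^d$ packing bound in the cube---is exactly the nontrivial content of \citet{Haussler95}, and treating it as a black box is the appropriate level here.
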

Once we have the covering number of $\Phi(V, \pi, \nu)$, plug it into lemma~\ref{lemma:beta_tail}, we will achieve the statistical error,

\paragraph{Theorem~\ref{thm:statistical_error}~(Stochastic error)}
\emph{Under Assumption~\ref{asmp:sample_measure}, with at least probability $1 - \delta$,
$$
\ell_w(\widehat V_w^*, \widehat\pi_w^*) - \ell_w(V_w^*, \pi_w^*)\le 2\sqrt\frac{M\rbr{\max\rbr{M/b, 1}}^{1 / \kappa}}{C_2T},
$$
where $M = \frac{D}{2}\log t + \log\rbr{e/\delta} + \log^+\rbr{\max\rbr{C_1C_2^{D/2}, \bar\beta}}$.
}
\begin{proof}
We use lemma~\ref{lemma:beta_tail} with $\Zcal = \Scal\times\Acal\times\RR\times\Scal$ and $\Gcal = \phi_{\Fcal_w\times\Pcal_w\times\Hcal_w}$. For $\forall\Phi(V, \pi, \nu)\in\Gcal$, it is bounded by $C = \frac{2}{1-\gamma}C_R + \lambda C_\pi$. Thus, 
\begin{eqnarray}\label{eq:prob_inter}
\PP\rbr{\sup_{V,\pi,\nu\in\Fcal_w\times\Pcal_w\times\Hcal_w}\abr{\frac{1}{T}\sum_{i=1}^T\phi_{V, \pi, \nu}\rbr{(s, a, s', R)_{i}} - \EE\sbr{\phi_{V, \pi, \nu}}}\ge \epsilon/2}\\
\le 16\EE\sbr{\Ncal\rbr{\frac{\epsilon}{16}, \Gcal, (Z_i'; i\in H)}}\exp\rbr{-\frac{m_t}{2}\rbr{\frac{\epsilon^2}{16C}}^2} + 2m_T\beta_{k_T}.
\end{eqnarray}
With some calculation, the distance in $\Gcal$ can be bounded,
\begin{eqnarray*}
&&\frac{1}{T}\sum_{i\in H}\abr{\phi_{V_1, \pi_1, \nu_1}(Z'_i) -\phi_{V_2, \pi_2, \nu_2}(Z'_i) }\\
&\le& \frac{4C}{T}\sum_{i\in H}\abr{\nu_1(s_i, a_i) - \nu_2(s_i, a_i)} +  \frac{2(1 + \gamma)C}{T}\sum_{i\in H}\abr{V_1(s_i) - V_2\rbr{s_i}} \\
&& + \frac{2\lambda C}{T}\sum_{i\in H}\abr{\log\pi_1(a_i|s_i) - \log\pi_2(a_i|s_i)},
\end{eqnarray*}
which leads to 
$$
\Ncal\rbr{12C\epsilon', \Gcal, (Z_i'; i\in H)}\le \Ncal(\epsilon', \Fcal_w, (Z_i'; i\in H))\Ncal(\epsilon', \Pcal_w, (Z_i'; i\in H))\Ncal(\epsilon', \Hcal_w, (Z_i'; i\in H))
$$
with $\lambda\in (0, 2]$. To bound these factors, we apply lemma~\ref{lemma:cover_pseudo}. We denote the psuedo-dimension of $\Fcal_w$, $\Pcal_w$, and $\Hcal_w$ as $D_V$, $D_\pi$, and $D_\nu$, respectively. Thus, 
$$
\Ncal\rbr{12C\epsilon', \Gcal, (Z_i'; i\in H)}\le e^3\rbr{D_V + 1}\rbr{D_\pi + 1}\rbr{D_\nu + 1}\rbr{\frac{4eC}{\epsilon'}}^{D_V + D_\pi + D_\nu},
$$
which implies
$$
\Ncal\rbr{\frac{\epsilon}{16}, \Gcal, (Z_i'; i\in H)}\le e^3\rbr{D_V + 1}\rbr{D_\pi + 1}\rbr{D_\nu + 1}\rbr{\frac{768eC^2}{\epsilon'}}^{D_V + D_\pi + D_\nu} = C_1\rbr{\frac{1}{\epsilon}}^D,
$$
where $C_1 = e^3\rbr{D_V + 1}\rbr{D_\pi + 1}\rbr{D_\nu + 1}\rbr{768eC^2}^D$ and $D = D_V + D_\pi + D_\nu$, \ie, the ``effective'' psuedo-dimension. 

Plug this into Eq.~\eq{eq:prob_inter}, we obtain
\begin{eqnarray*}
&&\PP\rbr{\sup_{V,\pi,\nu\in\Fcal_w\times\Pcal_w\times\Hcal_w}\abr{\frac{1}{T}\sum_{i=1}^T\phi_{V, \pi, \nu}\rbr{(s, a, s', R)_{i}} - \EE\sbr{\phi_{V, \pi, \nu}}}\ge \epsilon/2} \\
&\le& C_1\rbr{\frac{1}{\epsilon}}^D\exp\rbr{-4C_2m_t\epsilon^2} + 2m_T\beta_{k_T},
\end{eqnarray*}
with $C_2 = \frac{1}{2}\rbr{\frac{1}{8C}}^2$. If $D\ge 2$, and $C_1, C_2,\bar\beta, b, \kappa>0$, for $\delta\in(0, 1]$, by setting $k_t = \lceil\rbr{C_2T\epsilon^2/b}^{\frac{1}{\kappa+1}}\rceil$ and $m_T = \frac{T}{2k_T}$, by lemma 14 in~\citet{AntSzeMun08}, we have
$$
C_1\rbr{\frac{1}{\epsilon}}^D\exp\rbr{-4C_2m_T\epsilon^2} + 2m_T\beta_{k_T}< \delta,
$$
with $\epsilon = \sqrt\frac{M\rbr{\max\rbr{M/b, 1}}^{1 / \kappa}}{C_2t}$ where $M = \frac{D}{2}\log T + \log\rbr{e/\delta} + \log^+2\rbr{\max\rbr{C_1C_2^{D/2}, \bar\beta}}$.
\end{proof}

With the statistical error bound provided in Theorem~\ref{thm:statistical_error} for solving the derived saddle point problem with arbitrary learnable nonlinear approximators using off-policy samples, we can achieve the analysis of the total error, \ie,  
\paragraph{Theorem~\ref{thm:error_decomposition}} 
\emph{Let $\hat V_w^T $ be a candidate solution output from the proposed algorithm based on off-policy samples, with at least probability $1 - \delta$, we have
\begin{eqnarray*}
\nbr{\widehat V_w^N - V^*}^2_{\mu\pi_b} &\le& \underbrace{2\rbr{6(K+C_\infty)\epsilon_{\text{app}}^{\nu} + C_\nu(1 + \gamma)\epsilon_{\text{app}}^V(\lambda) + 3C_\nu\epsilon_{\text{app}}^\pi(\lambda) } }_{\text{approximation error due to parametrization}}\\
&& + \underbrace{16\lambda^2 C^2_\pi+ \rbr{2\gamma^2 + 2}\rbr{\frac{\gamma\lambda}{1 - \gamma} \max_{\pi\in\Pcal} H(\pi)}^2 }_{\text{bias due to smoothing}}+ \underbrace{4\sqrt\frac{M\rbr{\max\rbr{M/b, 1}}^{1 / \kappa}}{C_2T}
}_{\text{statistical error}} +\underbrace{\nbr{\widehat V_w^N - \widehat V_w^*}^2_{\mu\pi_b}}_{\text{optimization error}}.
\end{eqnarray*}
where $M$ is defined as above.
}

This theorem can be proved by combining Theorem~\ref{thm:statistical_error} into Lemma~\ref{lemma:error_decomposition}.

\subsection{Convergence Analysis} \label{appsec:convergence analysis}

As we discussed in~\secref{sec:optimization error}, the \algabb~algorithm converges to a stationary point if we can achieve the optimal solution to the dual functions. However, in general, such conditions restrict the parametrization of the dual functions. In this section, we first provide the proof for~\thmref{thm:convergence_opt}. Then, we provide a variant of the \algabb~in~\algref{alg:sbeed_dual_nn}, which still achieve the asymptotic convergence with arbitrary function approximation for the dual function, including neural networks with smooth activation functions. 

\paragraph{\thmref{thm:convergence_opt}[Convergence, \citet{GhaLan13}]}
\emph{
Consider the case when Euclidean distance is used in the algorithm. Assume that the parametrized objective $\widehat\ell_{T}(V_w,\pi_w)$ is $K$-Lipschitz and variance of its stochastic gradient is bounded by $\sigma^2$. Let the algorithm run for $N$ iterations with stepsize $\zeta_k=\min\{\frac{1}{K}, \frac{D'}{\sigma\sqrt{N}}\}$ for some $D'>0$ and output $w^1,\ldots, w^N$. Setting the candidate solution to be $(\widehat V_w^N,\widehat{\pi}_w^N)$ with $w$ randomly chosen from $w^1,\ldots, w^N$ such that $P(w=w^j)=\frac{2\zeta_j-K\zeta_j^2}{\sum_{j=1}^N(2\zeta_j-K\zeta_j^2)}$, then it holds that
$\EE\sbr{\nbr{\nabla \widehat\ell_T(\widehat V_w^N,\widehat{\pi}_w^N)}^2}\leq \frac{K D^2}{N}+ (D'+\frac{D}{D'})\frac{\sigma}{\sqrt{N}}$
where $D:=\sqrt{2(\widehat\ell_T(V_w^1,\pi_w^1) -\min \widehat\ell_T(V_w,\pi_w))/K}$ represents the distance of the initial solution to the optimal solution. 
}\\
The \thmref{thm:convergence_opt} straightforwardly generalizes the convergence result in~\citet{GhaLan13} to saddle-point optimization. 
\begin{proof}
As we discussed, given the empirical off-policy samples, the proposed algorithm can be understood as solving $\min_{V_w, \pi_w} \widehat\ell_{T}(V_w,\pi_w) \defeq \widehat L_T(V_w,\pi_w;\nu^*_w)$, where $\nu_w^* = \arg\max_{\nu_w} \widehat L_T(V_w,\pi_w;\nu_w)$. 

Following the Theorem 2.1 in~\citet{GhaLan13}, as long as the gradients $\nabla_{V_w} \widehat\ell_{T}(V_w,\pi_w)$ and $\nabla_{\pi_w} \widehat\ell_{T}(V_w,\pi_w)$ are unbiased, under the provided conditions, the finite-step convergence rate can be obtained. The unbiasedness of the gradient estimator is already proved in~\thmref{thm:gradient_estimator}. 

\end{proof} 

\begin{algorithm}[t] 
\caption{{\small A variant of \algabb~learning}} \label{alg:sbeed_dual_nn}
  \begin{algorithmic}[1]
    \STATE Initialize $w=(w_V,w_\pi,w_\rho)$ and $\pi_b$ randomly, set $\epsilon$.
    \FOR{episode $i=1,\ldots, T$}
      \FOR{size $k=1,\ldots, K$}
        \STATE Add new transition $(s, a, r, s')$ into $\Dcal$ by executing behavior policy $\pi_b$.
      \ENDFOR
      \FOR{iteration $j=1, \ldots, N$}
            \STATE Sample mini-batch $\cbr{s, a, s'}^{m}\sim\Dcal$. 
            \STATE Compute the stochastic gradient w.r.t. $w_\rho$ as $G_\rho = -\frac{1}{m}\sum_{\cbr{s, a, s'}\sim\Dcal}\rbr{{\delta(s,a, s')}- \rho(s, a)}\nabla_{w_\rho}\rho(s, a)$
            \STATE Compute the stochastic gradients w.r.t. $w_V$ and $w_\pi$ as~\eqref{thm:gradient_estimator} with $w^t_\rho$, denoted as $G_V$ and $G_\pi$, respectively. 
            \STATE Decay the stepsize $\xi_j$ and $\zeta_j$.
            \STATE Update the parameters of primal function by solving the prox-mappings, \ie,
            \vspace{-2mm}
            \begin{eqnarray*}
            &&\text{update $\rho$: }\quad w^j_\rho = P_{w^{j-1}_\rho}(-\xi_j G_\rho)\\[-2mm]
            &&\text{update $V$: }\quad w^j_V = P_{w^{j-1}_V}(\zeta_j G_V)\\[-2mm]
            &&\text{update $\pi$: }\quad w^j_\pi = P_{w^{j-1}_\pi}(\zeta_j G_\pi)
            \end{eqnarray*}
            \vspace{-6mm}
        \ENDFOR
        \STATE Update behavior policy $\pi_b = \pi^N$.
    \ENDFOR
  \end{algorithmic}
\end{algorithm}

Next, we will show that in the setting that off-policy samples are given, under some mild conditions on the neural networks parametrization, the~\algref{alg:sbeed_dual_nn} will achieve a {local Nash equilibrium} of the empirical objective asymptotically, \ie, $\rbr{w^+_V, w^+_\pi, w^+_\rho}$, such that 
$$
\nabla_{w_V, w_\pi}\widehat L_\eta\rbr{w^+_V, w^+_\pi, w^+_\rho} = 0,\quad \nabla_{w_\rho}\widehat L_\eta\rbr{w^+_V, w^+_\pi, w^+_\rho} = 0.
$$ 
In fact, by applying different decay rate of the stepsizes appropriately for the primal and dual variables in the two time scales updates, the asymptotic convergence of the~\algref{alg:sbeed_dual_nn} to local Nash equilibrium can be easily obtained by applying the Theorem 1 in~\citet{heusel2017gans}, which is original provided by~\citet{borkar1997stochastic}. We omit the proof  which is not the major contribution of this paper. Please refer to~\citet{heusel2017gans,borkar1997stochastic} for further details.

\section{More Experiments}\label{appendix:more_exp}

\subsection{Experimental Details}

\paragraph{Policy and value function parametrization} The choices of the parametrization of policy are largely based on the recent paper by~\citet{RajLowTodKak17}, which shows the natural policy gradient with RBF neural network achieves the state-of-the-art performances of TRPO on MuJoCo. For the policy distribution, we parametrize it as $\pi_{\theta_\pi}(a|s) = \Ncal(\mu_{\theta_\pi}(s), \Sigma_{\theta_\pi})$, where $\mu_{\theta_\pi}(s)$ is a two-layer neural nets with the random features of RBF kernel as the hidden layer and the $\Sigma_{\theta_\pi}$ is a diagonal matrix. The RBF kernel bandwidth is chosen via median trick~\citep{DaiXieHe14,RajLowTodKak17}. Same as~\citet{RajLowTodKak17}, we use $100$ hidden nodes in InvertedDoublePendulum, Swimmer, Hopper, and use $500$ hidden nodes in HalfCheetah. This parametrization was used in all on-policy and off-policy algorithms for their policy functions. We adapted the linear parametrization for control variable in TRPO and Dual-AC following~\citet{Dai18boosting}. In DDPG and our algorithm~\algabb, we need the parametrization for $V$ and $\rho$ (or $Q$) as fully connected neural networks with two tanh hidden layers with 64 units each. 

In the implementation of~\algabb, we use the Euclidean distance for $w_V$ and the $KL$-divergence for $w_\pi$ in the experiments. We emphasize that other Bregman divergences are also applicable.

\paragraph{Training hyperparameters} For all algorithms, we set $\gamma = 0.995$. All $V$ and $\rho$ (or $Q$) functions of \algabb~and DDPG were optimized with ADAM. The learning rates were chosen with a grid search over $\{0.1, 0.01, 0.001, 0.001\}$. For the \algabb, a stepsize of $0.005$ was used. For DDPG, an ADAM optimizer was also used to optimize the policy function. The learning rate is set to be $1e-4$ was used. For \algabb, $\eta$ was set from a grid search of $\{0.004, 0,01, 0.04, 0.1, 0.04\}$ and $\lambda$ from a grid search in $\cbr{0.001, 0.01, 0.1}$. The number of the rollout steps, $k$ was chosen by grid search from $\cbr{1, 10, 20, 100}$. For off-policy \algabb, a training frequency was chosen from $\{1, 2, 3\}\times 10^3$ steps. A batch size was tuned from $\{10000, 20000, 40000\}$. DDPG updated it's values every iteration and trained with a batch size tuned from $(32, 64, 128)$. For DDPG, $\tau$ was set to $1e-3$, reward scaling was set to $1$, and the O-U noise $\sigma$ was set to $0.3$.

\subsection{On-policy Comparison in Continuous Control Tasks}

\begin{figure*}[!t]
\centering
  \begin{tabular}{cc}
    \includegraphics[width=0.3\textwidth,  trim={0.5cm 0.6cm 0.6cm 1.1cm},clip]{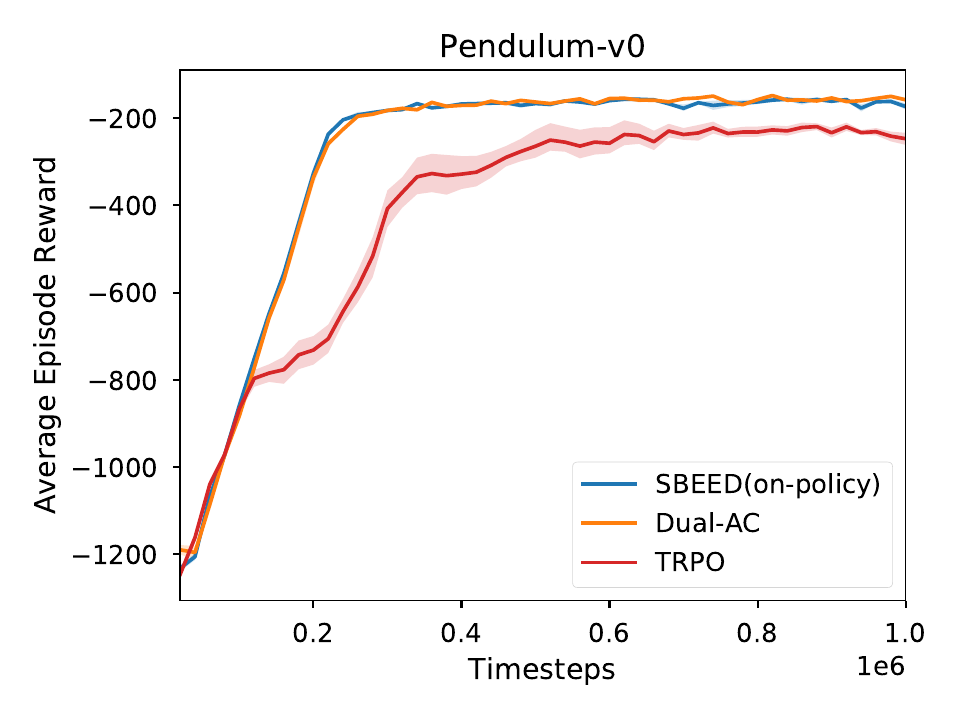}&
    \includegraphics[width=0.3\textwidth,  trim={0.5cm 0.6cm 0.6cm 1.1cm},clip]{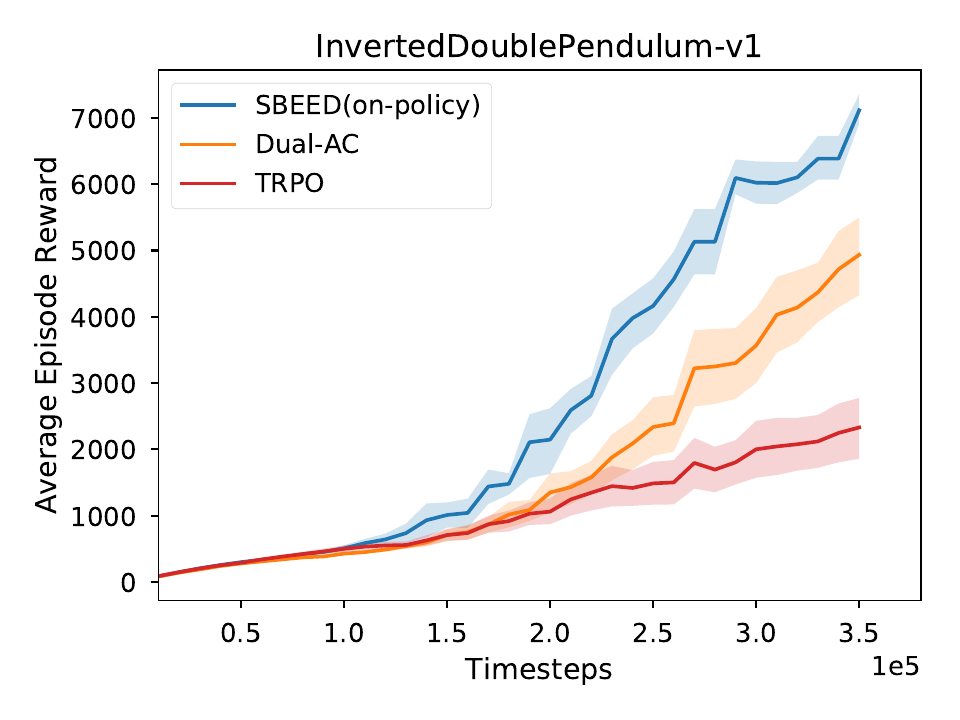}\\
    (a) Pendulum  &(b) InvertedDoublePendulum\\
    \vspace{2mm}
  \end{tabular}
  \begin{tabular}{ccc}
    \includegraphics[width=0.3\textwidth,  trim={0.5cm 0.6cm 0.6cm 1.1cm},clip]{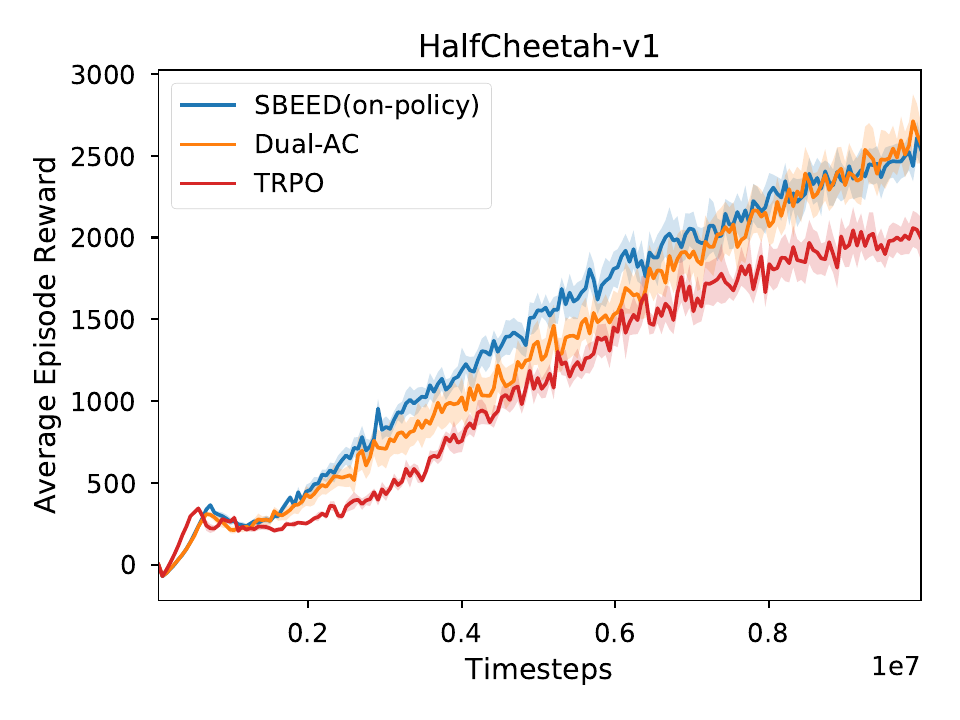}&
    \includegraphics[width=0.3\textwidth,  trim={0.5cm 0.6cm 0.6cm 1.1cm},clip]{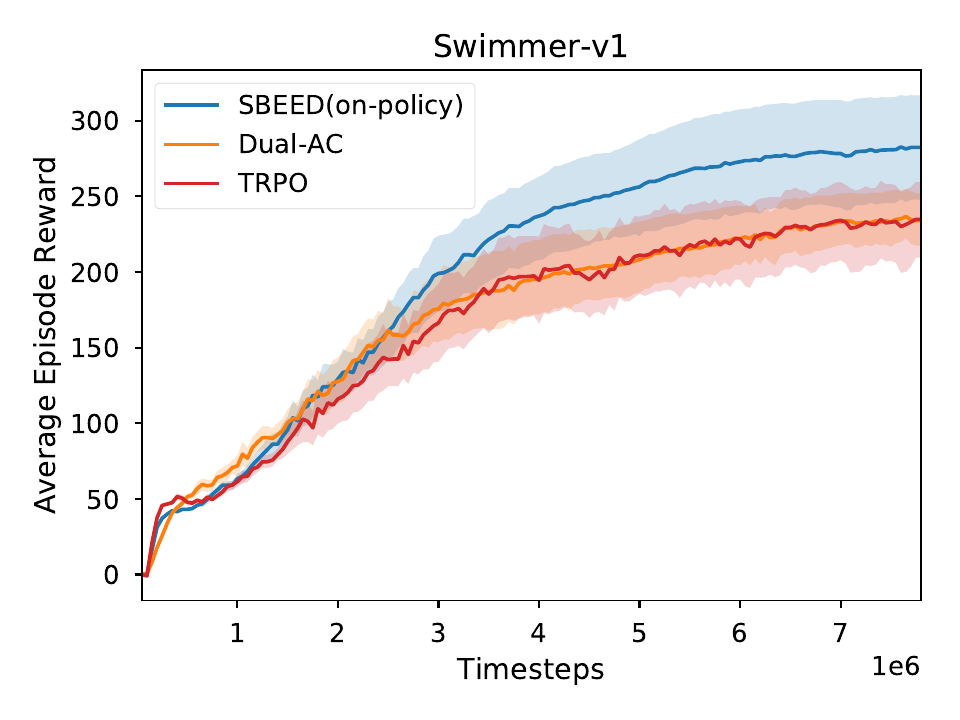}&
    \includegraphics[width=0.3\textwidth,  trim={0.5cm 0.6cm 0.6cm 1.1cm},clip]{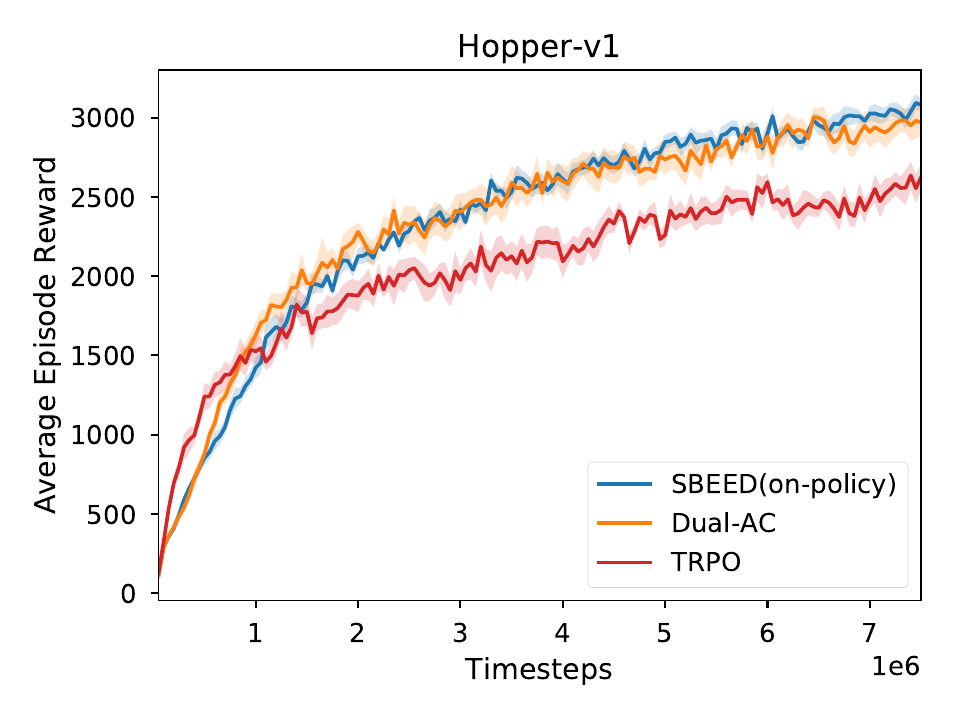}\\ 
    (c) HalfCheetah & (d) Swimmer & (e) Hopper \\
  \end{tabular}
  \caption{The results of \algabb~against TRPO and Dual-AC in on-policy setting. Each plot shows average reward during training across $5$ random runs, with $50\%$ confidence interval. The x-axis is the number of training iterations. \algabb~achieves better or comparable performance than TRPO and Dual-AC on all tasks. 
  }
  \label{fig:onpolicy_comparison}
\end{figure*}

We compared the~\algabb~to TRPO and Dual-AC in on-policy setting. We followed the same experimental set up as it is in off-policy setting. We ran the algorithm with $5$ random seeds and reported the average rewards with 50\% confidence intervals. The empirical comparison results are illustrated in Figure~\ref{fig:onpolicy_comparison}. We can see that in all these tasks, the proposed \algabb~achieves significantly better performance than the other algorithms. This can be thought as another ablation study that we switch off the ``off-policy'' in our algorithm. The empirical results demonstrate that the proposed algorithm is more flexible to way of the data sampled. 

We set the step size to be $0.01$ and the batch size to be $52$ trajectories in each iteration in all algorithms in the on-policy setting. For TRPO, the CG damping parameter is set to be $10^{-4}$.

\end{appendix}

\end{document}